\let\originalleft\left
\let\originalright\right
\renewcommand{\left}{\mathopen{}\mathclose\bgroup\originalleft}
\renewcommand{\right}{\aftergroup\egroup\originalright}
\def\argmin{\arg\min}
\def\argmax{\arg\max}
\def\fhat{\hat{f}}
\def\fv{\boldsymbol{f}}
\def\fvhat{\hat{\fv}}
\def\N{\mathcal{N}}
\def\R{\mathbb{R}}
\def\S{\mathbb{S}}
\def\T{^{\mkern-1.5mu\mathsf{T}}}
\def\gv{\boldsymbol{g}}
\def\uv{\boldsymbol{u}}
\def\Fv{\boldsymbol{F}}
\def\Gv{\boldsymbol{G}}
\def\Lv{\boldsymbol{L}}
\def\Wv{\boldsymbol{W}}
\def\Qv{\boldsymbol{Q}}
\def\Rv{\boldsymbol{R}}
\def\rv{\boldsymbol{r}}
\def\xv{\boldsymbol{x}}
\def\wv{\boldsymbol{w}}
\def\uvhat{\hat{\uv}}
\def\wvhat{\hat{\wv}}
\def\xvhat{\hat{\xv}}
\def\xhat{\hat{x}}
\def\uhat{\hat{u}}
\def\what{\hat{w}}
\def\muv{\boldsymbol{\mu}}
\def\thetav{\boldsymbol{\theta}}
\def\Thetav{\boldsymbol{\Theta}}
\def\thetatildev{\tilde{\thetav}}
\def\opt{^\*}
\def\inv{^{-1}}
\def\cat{\mathrm{Cat}}
\def\onehalf{\frac{1}{2}}
\def\diag{\mathrm{diag}}
\def\dregret{\mathrm{D}\text{-}\mathrm{Regret}}
\def\fisher{\mathcal{F}}
\def\const{\mathrm{const.}}
\def\piv{\boldsymbol{\pi}}
\def\th{\ensuremath{^\text{th}}~}
\renewcommand{\d}[1]{\,\mathrm{d} #1}
\newcommand{\E}[1]{\mathbb{E}\left[ #1 \right]}
\newcommand{\Esub}[2]{\mathbb{E}_{#1} \left[ #2 \right]}
\newcommand{\Psub}[2]{\mathbb{P}_{#1} \left( #2 \right)}
\newcommand{\KL}[2]{\mathrm{KL}(#1 \,\|\, #2 )}
\newcommand{\inner}[2]{\langle #1, #2 \rangle}
\newcommand{\bregman}[3]{D_{#1}( #2  \| #3 )}
\newcommand{\norm}[1]{\| #1 \|}
\newcommand{\pol}[1]{\pi_{#1}}
\newcommand{\polv}[1]{\piv_{#1}}
\newcommand{\indicator}[1]{\mathbf{1}\left\{ #1 \right\}}
\newtheorem{fact}{Fact}
\newtheorem{prop}{Proposition}
\def\HH{\mathcal{H}}
\def\MM{\mathcal{M}}\def\NN{\mathcal{N}}
\def\Ebb{\mathbb{E}}
\def\Nbb{\mathbb{N}}
\def\Rbb{\mathbb{R}}
\def\Vbb{\mathbb{V}}
\def\Av{\boldsymbol{A}}
\def\Fv{\boldsymbol{F}}
\def\Gv{\boldsymbol{G}}
\def\Lv{\boldsymbol{L}}
\def\Qv{\boldsymbol{Q}}\def\Rv{\boldsymbol{R}}
\def\Wv{\boldsymbol{W}}
\def\fv{\boldsymbol{f}}
\def\gv{\boldsymbol{g}}
\def\rv{\boldsymbol{r}}
\def\uv{\boldsymbol{u}}
\def\wv{\boldsymbol{w}}\def\xv{\boldsymbol{x}}
\def\R{\Rbb}
\def\const{\mathrm{const.}}
\def\diag{\mathrm{diag}}
\def\*{\star}
\newcommand{\tr}[1]{ \mathrm{tr}\left( #1\right)}
\newcommand\blfootnote[1]{
  \begingroup
  \renewcommand\thefootnote{}\footnote{#1}
  \addtocounter{footnote}{-1}
  \endgroup
}
\def\algfull{Dynamic Mirror Descent Model Predictive Control\xspace}
\def\ouralg{DMD-MPC\xspace}
\begin{document}

\title{An Online Learning Approach to \\ Model Predictive Control}
\author{
	\authorblockN{Nolan Wagener,\authorrefmark{1}\textsuperscript{$\#$}
				  Ching-An Cheng,\authorrefmark{1}\textsuperscript{$\#$}
				  Jacob Sacks,\authorrefmark{2} and
				  Byron Boots\authorrefmark{1}}
	\authorblockA{
				  Georgia Institute of Technology\\
				  \texttt{\{nolan.wagener, cacheng, jsacks\}@gatech.edu}, \texttt{bboots@cc.gatech.edu}}
}

\maketitle

\begin{abstract}
Model predictive control (MPC) is a powerful technique for solving dynamic control tasks.
In this paper, we show that there exists a close connection between MPC and 
online learning, an abstract theoretical framework for analyzing online decision making in the optimization literature.
This new perspective provides a foundation for leveraging powerful online learning algorithms to design MPC algorithms. 
Specifically, we propose a new algorithm based on dynamic mirror descent (DMD), an online learning algorithm that is designed for non-stationary setups. 
Our algorithm, \algfull (\ouralg), represents a general family of MPC algorithms that includes many existing techniques as special instances. 
\ouralg also provides a fresh perspective on previous heuristics used in MPC and suggests a principled way to design new MPC algorithms. 
In the experimental section of this paper, we demonstrate the flexibility of \ouralg, presenting a set of new MPC algorithms on a simple simulated cartpole and a simulated and real-world aggressive driving task. Videos of the real-world experiments can be found at \href{https://youtu.be/vZST3v0_S9w}{\color{blue}\texttt{https://youtu.be/vZST3v0\_S9w}} and \href{https://youtu.be/MhuqiHo2t98}{\color{blue}\texttt{https://youtu.be/MhuqiHo2t98}}.
\end{abstract}

\section{Introduction} \label{sec:intro}
\blfootnote{\kern-1em\authorrefmark{1}Institute for Robotics and Intelligent Machines\\
\authorrefmark{2}School of Electrical and Computer Engineering\\
\textsuperscript{$\#$}Equal contribution}
Model predictive control (MPC)~\cite{MPC-Survey} is an effective tool for control tasks involving dynamic environments, such as helicopter aerobatics \cite{HelicopterAerobatics} and aggressive driving \cite{Williams-Aggressive}.
One reason for its success is the pragmatic principle it adopts in choosing controls:
rather than wasting computational power to optimize a complicated controller for the full-scale problem (which may be difficult to accurately model), 
MPC instead optimizes a simple controller (e.g., an open-loop control sequence) over a shorter planning horizon that is just sufficient to make a sensible decision at the current moment.
By alternating between optimizing the simple controller and applying its corresponding control on the real system, MPC results in a closed-loop policy that can handle modeling errors and dynamic changes in the environment.

Various MPC algorithms have been proposed, using tools ranging from constrained optimization techniques~\cite{camacho2013model, MPC-Survey, Tassa-DDP} to sampling-based techniques~\cite{Williams-Aggressive}.
In this paper, we show that, while these algorithms were originally designed differently
if we view them through the lens of \emph{online learning}~\cite{OCO}, many of them actually follow the {same} general update rule.
Online learning is an abstract theoretical framework for analyzing online decision making. Formally, it concerns iterative interactions between a learner and an environment over $T$ rounds.
At round $t$, the learner makes a decision $\tilde\thetav_t$ from some decision set $\Thetav$. 
The environment then chooses a loss function $\ell_t$ based on the learner's decision, and the learner suffers a cost $\ell_t(\tilde\thetav_t)$.
In addition to  seeing the decision's cost, the learner may be given additional information about the loss function (e.g., its gradient evaluated at $\tilde\thetav_t$) to aid in choosing the next decision $\tilde\thetav_{t+1}$.
The learner's goal is to minimize the accumulated costs $\sum_{t=1}^T \ell_t(\tilde\thetav_t)$, e.g., by minimizing regret~\cite{OCO}.

We find that the MPC process bears a strong similarity with  online learning. 
At time $t$ (i.e., round $t$), an MPC algorithm optimizes a controller (i.e., the decision) over some cost function (i.e., the per-round loss).
To do so, it observes the cost of the initial controller (i.e., $\ell_t(\tilde\thetav_t)$), improves the controller, and executes a control based on the improved controller in the environment to get to the next state (which in turn defines the next per-round loss) with a new controller $\tilde\thetav_{t+1}$.

In view of this connection, we propose a generic framework, \emph{\ouralg} (\algfull), for synthesizing MPC algorithms. 
\ouralg is based on a first-order online learning algorithm called dynamic mirror descent (DMD)~\cite{Hall-DMD}, a generalization of mirror descent~\citep{beck2003mirror} for dynamic comparators. 
We show that several existing MPC algorithms~\citep{Williams-MPPI, Williams-IT-MPC} are special cases of \ouralg, given specific choices of step sizes, loss functions, and regularization.
Furthermore, we demonstrate how new MPC algorithms can be derived systematically from \ouralg with only mild assumptions on the regularity of the cost function.
This allows us to even work with discontinuous cost functions (like indicators) and discrete controls.
Thus, \ouralg offers a spectrum from which practitioners can easily customize new algorithms for their applications. 

In the experiments, we apply \ouralg to design a range of MPC algorithms and study their empirical performance.
Our results indicate the extra design flexibility offered by \ouralg does make a difference in practice;
by properly selecting hyperparameters which are obscured in the previous approaches, we are able to improve the performance of existing algorithms.
Finally, we apply \ouralg on a real-world AutoRally car platform~\citep{AutoRally} for autonomous driving tasks and show it can achieve competent performance.

\vspace{1em}
\textit{Notation:}
As our discussions will involve planning horizons, for clarity, we use lightface to denote variables that are meant for a single time step, and boldface to denote the variables congregated across the MPC planning horizon.
For example, we use $\uhat_t$ to denote the planned control at time $t$ and $\uvhat_t \triangleq (\uhat_t, \ldots, \uhat_{t+H-1})$ to denote an $H$-step planned control sequence starting from time $t$.
We use a subscript to extract elements from a congregated variable; e.g., we use $\uhat_{t,h}$ to the denote the $h$\th element in $\uvhat_t$ (the subscript index starts from zero). 
All the variables in this paper are finite-dimensional.

\section{An Online Learning Perspective on MPC} \label{sec:mpc_ol}

\subsection{The MPC Problem Setup} \label{sec:mpc setup}
Let $n,~m \in \Nbb_+$ be finite. We consider the problem of controlling a discrete-time stochastic dynamical system
\begin{align} \label{eq:true dynamics}
x_{t+1} \sim f(x_t, u_t)
\end{align}
for some stochastic transition map $f : \R^n \times \R^m  \to \R^n$. At time $t$, the system is in state $x_t \in \R^n$.
Upon the execution of control $u_t \in \R^m$, the system randomly transitions to the next state $x_{t+1}$, and an instantaneous cost $c(x_t, u_t)$ is incurred.
Our goal is to design a state-feedback control law (i.e., a rule of choosing $u_t$ based on $x_t$) such that the system exhibits good performance (e.g., accumulating low costs over $T$ time steps).

In this paper, we adopt the MPC approach to choosing $u_t$:
at state $x_t$, we imagine controlling a stochastic dynamics model $\fhat$ (which approximates our system $f$) for $H$ time steps into the future. 
Our planned controls come from a control distribution $\polv{\thetav}$ that is parameterized by some vector $\thetav \in \Thetav$, where $\Thetav$ is the feasible parameter set.
In each simulation (i.e., rollout), we sample\footnote{This can be sampled in either an open-loop or closed-loop fashion.}
a control sequence $\uvhat_t$ from the control distribution $\polv{\thetav}$ and recursively apply it to $\fhat$ to generate a predicted state trajectory $\xvhat_t \triangleq (\xhat_t, \xhat_{t+1}, \ldots, \xhat_{t+H})$: let $\xhat_t = x_t $; for \mbox{$\tau = t,\dots, t+H-1$}, we set $\xhat_{\tau+1} \sim \fhat(\xhat_\tau, \uhat_\tau)$.
More compactly, we can write the simulation process as
\begin{align} \label{eq:congregated model dynamics}
\xvhat_t  \sim \fvhat(x_t, \uvhat_t)
\end{align}
in terms of some $\fvhat$ that is defined naturally according to the above recursion. 
Through these simulations, we desire to select a parameter $\thetav_t \in \Thetav$ that minimizes an MPC objective $\hat{J}(\polv{\thetav}; x_t)$, which aims to predict the performance of the system if we were to apply the control distribution $\polv{\thetav}$ starting from $x_t$.\footnote{$\hat{J}$ can be seen as a surrogate for the long-term performance of our controller.
Typically, we set the planning horizon $H$ to be much smaller than $T$ to reduce the optimization difficulty and to mitigate modeling errors.}
In other words,  we wish to find the $\thetav_t$ that solves 
\begin{equation} \label{eq:mpc_obj}
\min_{\thetav \in \Thetav}\hat{J}(\polv{\thetav}; x_t).
\end{equation}
Once $\thetav_t$ is decided, we then sample\footnote{This setup can also optimize deterministic policies, e.g., by defining $\polv{\thetav}$ to be a Gaussian policy with the mean being the deterministic policy.}
$\uvhat_t$ from $\polv{\thetav_t}$, extract the first control $\uhat_t$, and apply it on the real dynamical system $f$ in~\eqref{eq:true dynamics} (i.e., set $u_t = \uhat_t$) to go to the next state $x_{t+1}$. Because $\thetav_t$ is determined based on $x_t$,  MPC is effectively state-feedback.
  
The motivation behind MPC is to use the MPC objective $\hat{J}$ to reason about the controls required to achieve desirable long-term behaviors.
Consider the statistic
\begin{align} \label{eq:sum of costs}
C(\xvhat_t, \uvhat_t) \triangleq \sum_{h=0}^{H-1} c(\xhat_{t+h}, \uhat_{t+h}) + c_{\mathrm{end}}(\xhat_{t+H}),
\end{align}
where $c_{\mathrm{end}}$ is a terminal cost function. 
A popular MPC objective is $\hat{J}(\polv{\thetav};x_t) = \mathbb{E}[C(\xvhat_t, \uvhat_t) \mid x_t, \polv{\thetav}, \fvhat]$, which estimates the expected $H$-step future costs. 
Later in~\cref{sec:opt}, we will discuss several MPC objectives and their properties.

Although the idea of MPC sounds intuitively promising, the optimization can only be approximated in practice (e.g., using an iterative algorithm like gradient descent), because~\eqref{eq:mpc_obj} is often a stochastic program (like the example above) and the control command $u_t$ needs to be computed at a high frequency.
In consideration of this imperfection, it is common to heuristically \emph{bootstrap} the previous approximate solution as the initialization to the current problem. 
Specifically, let $\thetav_{t-1}$ be the approximate solution to the previous problem and $\tilde{\thetav}_{t}$ denote the initial condition of $\thetav$ in solving~\eqref{eq:mpc_obj}. The bootstrapping step can then written as 
\begin{align} \label{eq:shift operation}
\tilde{\thetav}_{t} = \Phi(\thetav_{t-1})
\end{align}
by effectively defining a \emph{shift operator} $\Phi$ (see \cref{app:shift operator} for details).
Because the subproblems in~\eqref{eq:mpc_obj} of two consecutive time steps share all control variables except for the first and the last ones, shifting the previous solution provides a warm start to~\eqref{eq:mpc_obj} to amortize the computational complexity.

\subsection{The Online Learning Perspective} \label{sec:online learning setup}

\begin{figure}
	\centering
	\begin{tikzpicture}[scale=0.55, every node/.style={scale=0.8}]
	\node() at (-7, 0) {};
	\node (xtm2) at (-6.5, 0) {};
	\node (xt) at (0, 0) {$x_t$};
	\node (xtm1) at (-4, 0) {$x_{t-1}$};
	\node (xtp1) at (4, 0) {$x_{t+1}$};
	\node (xtp2) at (6.7, 0) {};
	\node (utm2) at (-5, 1) {$u_{t-2}$};
	\node (utm1) at (-1, 1) {$u_{t-1}$};
	\node (ut) at (3, 1) {$u_t$};
	\node (utp1) at (6.7, 1) {};
	
	\node (thtm2) at (-6.5, 3) {};
	\node (tilthtm1) at (-4.3, 3) {$\tilde\thetav_{t-1}$};
	\node (thtm1) at (-2.6, 3) {$\thetav_{t-1}$};
	\node (tiltht) at (-0.3, 3) {$\tilde\thetav_t$};
	\node (tht) at (1.5, 3) {$\thetav_t$};
	\node (tilthtp1) at (3.7, 3) {$\tilde\thetav_{t+1}$};
	\node (thtp1) at (5.4, 3) {$\thetav_{t+1}$};
	\node (tilthtp2) at (6.7, 3) {};
	\node (elltm1) at (-4, 2) {$\ell_{t-1}$};
	\node (ellt) at (0, 2) {$\ell_t$};
	\node (elltp1) at (4, 2) {$\ell_{t+1}$};
	
	\node at (-4, 4) {round $t-1$};
	\node at (0, 4) {round $t$};
	\node at (4, 4) {round $t+1$};
	
	\draw[->, blue, thick] (thtm2.south west) -- (utm2.north west);
	\draw[->, blue, thick] (thtm1.south east)  -- (utm1.north west);
	\draw[->, blue, thick] (tht.south east) -- (ut.north west);
	\draw[->, blue, thick] (thtp1.south east) -- (utp1.north west);
	\draw[->, blue, thick] (thtm2.east) -- (tilthtm1.west);
	\draw[->, blue, thick] (thtm1.east) -- (tiltht.west) node [midway, above] {{\color{black}~{$\Phi$}}};
	\draw[->, blue, thick] (tht.east) -- (tilthtp1.west) node [midway, above] {{\color{black}~{$\Phi$}}};
	\draw[->, blue, thick] (thtp1.east) -- (tilthtp2.west);
	\draw[->, red, thick] (xtm1.north) -- (elltm1.south);
	\draw[->, red, thick] (xt.north) -- (ellt.south);
	\draw[->, red, thick] (xtp1.north) -- (elltp1.south);
	
	\draw[->, red, thick] (xtm2.east) -- (xtm1.west);
	\draw[->, red, thick] (xtm1.east) -- (xt.west);
	\draw[->, red, thick] (xt.east) -- (xtp1.west);
	\draw[->, red, thick] (xtp1.east) -- (xtp2.west);
	\draw[->, red, thick] (utm2.south) -- (xtm1.north west);
	\draw[->, red, thick] (utm1.south) -- (xt.north west);
	\draw[->, red, thick] (ut.south) -- (xtp1.north west);
	
	\draw[->, blue, thick] (tilthtm1.east) -- (thtm1.west);
	\draw[->, blue, thick] (elltm1.east) -- (thtm1.south west);
	\draw[->, blue, thick] (tiltht.east) -- (tht.west);
	\draw[->, blue, thick] (ellt.east) -- (tht.south west);
	\draw[->, blue, thick] (tilthtp1.east) -- (thtp1.west);
	\draw[->, blue, thick] (elltp1.east) -- (thtp1.south west);
	
	\draw[densely dotted, thick] (-6, -0.5) -- (-6, 4.25);
	\draw[densely dotted, thick] (-2, -0.5) -- (-2, 4.25);
	\draw[densely dotted, thick] (2, -0.5) -- (2, 4.25);
	\draw[densely dotted, thick] (6, -0.5) -- (6, 4.25);
	\end{tikzpicture}
	\caption{Diagram of the online learning perspective, where blue and red denote the learner and the environment, respectively.} \label{fig:online learning perspective}
\end{figure}
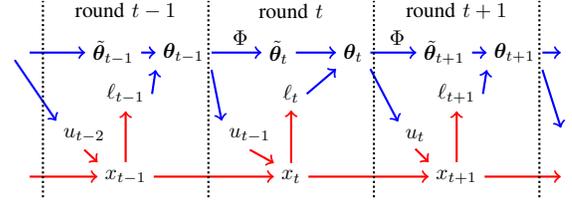

As discussed, the iterative update process of MPC resembles the setup of online learning~\citep{OCO}. Here we provide the details to convert an MPC setup into an online learning problem. Recall from the introduction that online learning mainly consists of three components: the decision set, the learner's strategy for updating decisions, and the environment's strategy for updating per-round losses. We show the counterparts in MPC that correspond to each component below. Note that in this section we will overload the notation $\hat{J}(\thetav; x_t)$ to mean $\hat{J}(\polv{\thetav}; x_t)$.

We use the concept of per-round loss in online learning as a mechanism to measure the decision uncertainty in MPC, and propose the following identification (shown in~\cref{fig:online learning perspective}) for the MPC setup described in the previous section: 
we set the rounds in online learning to synchronize with the time steps of our control system, set the decision set $\Thetav$ as the space of feasible parameters of the control distribution $\polv{\thetav}$, set the learner as the MPC algorithm which in round $t$ outputs the decision $\tilde{\thetav}_t \in \Thetav$ and side information $u_{t-1}$, and set the per-round loss as
\begin{align} \label{eq:per-round loss}
\ell_t(\cdot) = \hat{J}(\cdot\, ; x_t).
\end{align}
In other words, in round $t$ of this online learning setup, the learner plays a decision $\tilde{\thetav}_t$ along with a side information $u_{t-1}$ (based on the optimized solution $\thetav_{t-1}$ and the shift operator in~\eqref{eq:shift operation}), the environment selects  the per-round loss \mbox{$\ell_t(\cdot) = \hat{J}(\cdot; x_t)$} (by applying $u_{t-1}$ to the real dynamical system in~\eqref{eq:true dynamics} to transit the state to $x_t$), and finally the learner receives $\ell_t$ and incurs cost $\ell_t(\tilde\thetav_t)$ (which measures the sub-optimality of the future plan made by the MPC algorithm).

This online learning setup differs slightly from the standard setup in its separation of the decision $\tilde{\thetav}_t$ and the side information $u_{t-1}$; while our setup can be converted into a standard one that treats $\thetav_{t-1}$ as the sole decision played in round $t$, we adopt this explicit separation in order to emphasize that the variable part of the incurred cost $\ell_t(\tilde\thetav_t)$  pertains to only $\tilde\thetav_t$. 
That is, the learner cannot go back and revert the previous control $u_{t-1}$ already applied on the system, but only uses $\ell_t$ to update the current and future controls $\uhat_t, \dots, \uhat_{t+H-1}$.

The performance of the learner in online learning (which by our identification is the MPC algorithm) is measured in terms of the accumulated costs $\sum_{t=1}^{T} \ell_t(\tilde{\thetav}_t)$. 
For problems in non-stationary setups, a normalized way to describe the accumulated costs in the online learning literature
is through the concept of \emph{dynamic regret}~\cite{Hall-DMD, AdaptiveOL}, which is defined as   
\begin{align} \label{eq:dynamic regret}
\dregret = \sum_{t=1}^T \ell_t(\tilde{\thetav}_t) - \sum_{t=1}^T  \ell_t(\thetav_t\opt),
\end{align}
where $\thetav_t\opt \in \argmin_{\thetav \in \Thetav}\, \ell_t(\thetav)$. 
Dynamic regret quantifies how suboptimal the played decisions $\tilde{\thetav}_1,\dots,\tilde{\thetav}_T$ are on the corresponding loss functions. 
In our proposed problem setup, the optimality concept associated with dynamic regret conveys a \emph{consistency} criterion desirable for MPC: we would like to make a decision $\thetav_{t-1}$ at state $x_{t-1}$
such that, after applying control $u_{t-1}$ and entering the new state $x_t$, its shifted plan $\tilde\thetav_t$ remains close to optimal with respect to the new loss function $\ell_t$.
If the dynamics model $\fhat$ is accurate and the MPC algorithm is ideally solving~\eqref{eq:mpc_obj}, we can expect that bootstrapping the previous solution $\thetav_{t-1}$ through~\eqref{eq:shift operation} into $\thetatildev_t$  would result in a small instantaneous gap $\ell_t(\tilde{\thetav}_t) - \ell_t(\thetav_t\opt)$ which is solely due to unpredictable future information (such as the stochasticity in the dynamical system).
In other words, an online learning algorithm with small dynamic regret, if applied to our online learning setup, would produce a consistently optimal MPC algorithm with regard to the solution concept discussed above. However, we note that having small dynamic regret here does not directly imply good absolute performance on the control system, because the overall performance of the MPC algorithm is largely dependent on the form of the MPC objective $\hat{J}$ (e.g., through choice of $H$ and accuracy of $\fhat$).
Small dynamic regret more precisely means whether the plan produced by an MPC algorithm is consistent with the given MPC objective.

\section{A Family of MPC Algorithms Based on Dynamic Mirror Descent}

The online learning perspective on MPC suggests that good MPC algorithms can be designed from online learning algorithms that achieve small dynamic regret. This is indeed the case. We will show that a range of existing MPC algorithms are in essence applications of a classical online learning algorithm called dynamic mirror descent (DMD)~\citep{Hall-DMD}.
DMD is a generalization of mirror descent~\citep{beck2003mirror} to problems involving dynamic comparators (in this case, the $\{\thetav_t^\*\}$ in dynamic regret in~\eqref{eq:dynamic regret}). In round $t$, DMD applies the following update rule:
\begin{align}\label{eq:DMD}
\begin{split}
\thetav_t &= \argmin_{\thetav \in \Thetav}\,  \inner{\gamma_t \gv_t}{\thetav} + \bregman{\psi}{\thetav}{\tilde\thetav_t} \\
\tilde\thetav_{t+1} &= \Phi(\thetav_{t}),
\end{split}
\end{align}
where $\gv_t = \nabla \ell_t(\tilde\thetav_t)$
 (which can be replaced by unbiased sampling if $\nabla \ell_t(\tilde\thetav_t)$ is an expectation), 
$\Phi$ is called the \emph{shift model},\footnote{In~\citep{Hall-DMD}, $\Phi$ is called a \emph{dynamical model}, but it is not the same as the dynamics of our control system.
We therefore rename it to avoid confusion.} $\gamma_t > 0$ is the step size, and for some $\thetav, \thetav' \in \Thetav$,
$
\bregman{\psi}{\thetav}{\thetav'} \triangleq \psi(\thetav) - \psi(\thetav') - \inner{\nabla \psi(\thetav')}{\thetav - \thetav'}
$
is the Bregman divergence generated by a strictly convex function $\psi$ on $\Thetav$. 

The first step of DMD in~\eqref{eq:DMD} is reminiscent of the proximal update in the usual mirror descent algorithm. It can be thought of as an optimization step where the Bregman divergence acts as a regularization to keep $\thetav$ close to $\tilde\thetav_t$.
Although $\bregman{\psi}{\thetav}{\thetav'}$ is not necessarily a metric (since it may not be symmetric), it is still useful to view it as a distance between $\thetav$ and $\thetav'$.
Indeed, familiar examples of the Bregman divergence include the squared Euclidean distance and KL divergence\footnote{For probability distributions $p$ and $q$ over a random variable $x$, the KL divergence is defined as $\KL{p}{q} \triangleq \Ebb_{x \sim p}\left[ \log \frac{p(x)}{q(x)} \right]$.}~\citep{banerjee2005clustering}.

The second step of DMD in~\eqref{eq:DMD} uses the shift model $\Phi$ to anticipate the optimal decision for the next round. In the context of MPC, a natural choice for the shift model is the shift operator in~\eqref{eq:shift operation} defined previously in~\cref{sec:mpc setup} (hence the same notation), because the per-round losses in two consecutive rounds here concern problems with shifted time indices.
\citet{Hall-DMD} show that the dynamic regret of DMD scales with how much the optimal decision sequence $\{\thetav_t^\*\}$ deviates from $\Phi$ (i.e., $\sum_t \norm{\thetav_{t+1}^\* - \Phi(\thetav_t^\*)})$, which is proportional to the unpredictable elements of the problem.

\begin{algorithm}
	\caption{{\small Dynamic Mirror Descent MPC (\ouralg)}}
	\label{alg:dmd-mpc}
	
	\For{$t = 1, 2, \ldots, T$} {
		$\ell_t(\cdot) = \hat{J}(\cdot\,; x_t)$
		
		$\thetav_t = \displaystyle\argmin_{\thetav \in \Thetav} \; \inner{\gamma_t\nabla \ell_t(\thetatildev_t)}{\thetav} + \bregman{\psi}{\thetav}{\thetatildev_t}$
				
		Sample $\uvhat_t \sim \polv{\thetav_t}$ and set $u_t = \uhat_t$
		
		Sample $x_{t+1} \sim f(x_t, u_t)$
		
		$\thetatildev_{t+1} = \Phi(\thetav_t)$ 
	}
\end{algorithm}

\noindent Applying DMD in~\eqref{eq:DMD} to the online learning problem described in~\cref{sec:online learning setup} leads to an MPC algorithm shown in~\cref{alg:dmd-mpc}, which we call \emph{\ouralg}.
More precisely, \ouralg represents a family of MPC algorithms in which a specific instance is defined by a choice of:
\begin{enumerate}
\item the MPC objective $\hat{J}$ in~\eqref{eq:per-round loss},
\item the form of the control distribution $\polv{\thetav}$, and
\item  the Bregman divergence $D_\psi$ in~\eqref{eq:DMD}. 
\end{enumerate}
Thus, we can use \ouralg as a generic strategy for synthesizing MPC algorithms.
In the following, we use this recipe to recreate several existing MPC algorithms and demonstrate new MPC algorithms that naturally arise from this framework.

\subsection{Loss Functions} \label{sec:opt}

We discuss several definitions of the per-round loss $\ell_t$, which all result from the formulation in~\eqref{eq:per-round loss} but with different  $\hat{J}$. These loss functions are based on the statistic $C(\hat\xv_t, \hat\uv_t)$ defined in~\eqref{eq:sum of costs} which measures the $H$-step accumulated cost of a given trajectory.
For transparency of exposition, we will suppose henceforth that the control distribution $\polv{\thetav}$ is open-loop\footnote{Note again that even while using open-loop control distributions, the overall control law of MPC is state-feedback.}; similar derivations follow naturally for closed-loop control distributions. 
For convenience of practitioners, we also provide expressions of their gradients in terms of the likelihood-ratio derivative\footnote{We assume the control distribution is sufficiently regular with respect to its parameter so that the likelihood-ratio derivative rule holds.}~\citep{glynn1990likelihood}. For some function $L_t(\hat\xv_t, \hat\uv_t)$, all these gradients shall have the form
\begin{small}
\begin{equation}\label{eq:likelihood-ratio}
\nabla \ell_t(\thetav) = \Ebb_{\hat\uv_t \sim \polv{\thetav}} \Esub{\hat\xv_t \sim \hat\fv(x_t, \hat\uv_t)}{L_t(\hat\xv_t, \hat\uv_t) \nabla_{\thetav} \log \polv{\thetav}(\hat\uv_t)}.
\end{equation}
\end{small}
In short, we will denote $\Ebb_{\hat\uv_t \sim \polv{\thetav}} \Ebb_{\hat\xv_t \sim \hat\fv(x_t, \hat\uv_t)}$ as $\Ebb_{\polv{\thetav}, \hat\fv}$. These gradients in practice are approximated by finite samples.

\subsubsection{Expected Cost}
The most commonly used MPC objective is the $H$-step expected accumulated cost function under model dynamics,
because it directly estimates the expected long-term behavior when the dynamics model $\fhat$ is accurate and $H$ is large enough. Its per-round loss function is\footnote{In experiments, we subtract the empirical average of the sampled costs from $C$ in~\eqref{eq:grad expected cost (MPC obj)} to reduce the variance, at the cost of a small amount of bias.}
\begin{align}  \label{eq:expected cost (MPC obj)}
\ell_t(\thetav) &= \Esub{\polv{\thetav}, \hat\fv}{C(\hat\xv_t, \hat\uv_t)} \\[1ex]
\nabla \ell_t(\thetav) &= \Esub{\polv{\thetav}, \hat\fv}{C(\hat\xv_t, \hat\uv_t) \nabla_{\thetav} \log \polv{\thetav}(\hat\uv_t)}. \label{eq:grad expected cost (MPC obj)}
\end{align}

\subsubsection{Expected Utility}
Instead of optimizing for average cost, we may care to optimize for some preference related to the trajectory cost $C$, such as having the cost be below some threshold.
This idea can be formulated as a \emph{utility} that returns a normalized score related to the preference for a given trajectory cost $C(\hat\xv_t, \hat\uv_t)$.
Specifically, suppose that $C$ is lower bounded by zero\footnote{If this is not the case, let $c_{\min} \triangleq \inf_{\hat\xv_t, \hat\uv_t} C(\hat\xv_t, \hat\uv_t)$, which we assume is finite.
We can then replace $C$ with $\tilde C(\hat\xv_t, \hat\uv_t) \triangleq C(\hat\xv_t, \hat\uv_t) - c_{\min}$.} and at some round $t$ define the utility $U_t : \R_+ \to [0, 1]$ (i.e., $U_t :  C(\hat\xv_t, \hat\uv_t) \mapsto U_t( C(\hat\xv_t, \hat\uv_t))$) to be a function with the following properties:
$U_t(0) = 1$,
\mbox{$U_t$ is monotonically decreasing,} and
$\lim_{z \to +\infty} U_t(z) = 0$.
These are sensible properties since we attain maximum utility when we have zero cost, the utility never increases with the cost, and the utility approaches zero as the cost increases without bound.
We then define the per-round loss as
\begin{align} \label{eq:expected utility (MPC obj)}
\ell_t(\thetav) &= -\log \Esub{\polv{\thetav}, \hat\fv}{U_t(C(\hat\xv_t, \hat\uv_t))} \\[1ex]
\nabla \ell_t(\thetav) &= -\frac{\Esub{\polv{\thetav}, \hat\fv}{U_t(C(\hat\xv_t, \hat\uv_t)) \nabla_{\thetav} \log \polv{\thetav}(\hat\uv_t)}}{\Esub{\polv{\thetav}, \hat\fv}{U_t(C(\hat\xv_t, \hat\uv_t))}}. \label{eq:grad expected utility (MPC obj)}
\end{align}
The gradient in~\eqref{eq:grad expected utility (MPC obj)} is particularly appealing when estimated with samples.
Suppose we sample $N$ control sequences $\hat\uv_t^1, \dots, \hat\uv_t^N$ from $\polv{\thetav}$ and (for the sake of compactness) sample one state trajectory from $\hat\fv$ for each corresponding control sequence, resulting in $\hat\xv_t^1, \dots, \hat\xv_t^N$. Then the estimate of~\eqref{eq:grad expected utility (MPC obj)} is a convex combination of gradients:
\begin{align*}
\nabla \ell_t(\thetav) \approx -\sum_{i=1}^N w_i \nabla_{\thetav} \log \polv{\thetav}(\hat\uv_t^i),
\end{align*}
where $w_i = \frac{U_t(C_i)}{\sum_{j=1}^N U_t(C_j)} $ and $ C_i = C(\hat\xv_t^i, \hat\uv_t^i)$, for \mbox{$i = 1,\dots,N$}.
We see that each weight $w_i$ is computed by considering the relative utility of its corresponding trajectory. A cost $C_i$ with high relative utility will push its corresponding weight $w_i$ closer to one, whereas a low relative utility will cause $w_i$ to be close to zero, effectively rejecting the corresponding sample.

We give two examples of utilities and their related losses.

\paragraph{Probability of Low Cost}
For example, we may care about the system being below some cost threshold as often as possible.
To encode this preference, we can use the threshold utility $U_t(C) \triangleq \indicator{C \le C_{t,\max}}$, where $\indicator{\cdot}$ is the indicator function and $C_{t,\max}$ is a threshold parameter.
Under this choice, the loss and its gradient become
\begin{small}
\begin{align} \label{eq:low cost probability (MPC obj)}
\ell_t(\thetav) &= -\log \Esub{\polv{\thetav}, \hat\fv}{\indicator{C(\hat\xv_t, \hat\uv_t) \le C_{t,\max}}} \\
				&= -\log \Psub{\polv{\thetav}, \hat\fv}{C(\hat\xv_t, \hat\uv_t) \le C_{t,\max}} \nonumber \\
\nabla \ell_t(\thetav) &= -\frac{\Esub{\polv{\thetav}, \hat\fv}{\indicator{C(\hat\xv_t, \hat\uv_t) \le C_{t,\max}} \nabla_{\thetav} \log \polv{\thetav}(\hat\uv_t)}}
								{\Esub{\polv{\thetav}, \hat\fv}{\indicator{C(\hat\xv_t, \hat\uv_t) \le C_{t,\max}}}}.
\end{align}
\end{small}
As we can see, this loss function also gives the probability of achieving cost below some threshold.
As a result~(\cref{fig:indicator}), costs below $C_{t,\max}$ are treated the same in terms of the utility.
This can potentially make optimization easier since we are trying to make good trajectories as likely as possible instead of finding the best trajectories as in~\eqref{eq:expected cost (MPC obj)}.

However, if the threshold $C_{t,\max}$ is set too low and the gradient is estimated with samples, the gradient estimate may have high variance due to the large number of rejected samples.
Because of this, in practice, the threshold is set adaptively, e.g., as the largest cost of the top \emph{elite fraction} of the sampled trajectories with smallest costs~\citep{CEM}.
This allows the controller to make the best sampled trajectories more likely and therefore improve the controller.

\paragraph{Exponential Utility}
We can also opt for a continuous surrogate of the indicator function, in this case the exponential utility $U_t(C) \triangleq \exp(-\frac{1}{\lambda} C)$, where $\lambda > 0$ is a scaling parameter.
Unlike the indicator function, the exponential utility provides nonzero feedback for any given cost and allows us to discriminate between costs (i.e., if $C_1 > C_2$, then $U_t(C_1) < U_t(C_2)$), as shown in~\cref{fig:exponential}.
Furthermore, $\lambda$ acts as a continuous alternative to $C_{t,\max}$ and dictates how quickly or slowly $U_t$ decays to zero, which in a soft way determines the cutoff point for rejecting given costs.

Under this choice, the loss and its gradient become
\begin{small}
\begin{align} \label{eq:exponentiated utility (MPC obj)}
\ell_t(\thetav) &= -\log \Esub{\polv{\thetav}, \hat\fv}{\exp\left( -\frac{1}{\lambda} C(\hat\xv_t, \hat\uv_t) \right)} \\
\nabla \ell_t(\thetav) &= -\frac{\Esub{\polv{\thetav}, \hat\fv}{\exp\left( -\frac{1}{\lambda} C(\hat\xv_t, \hat\uv_t) \right) \nabla_{\thetav} \log \polv{\thetav}(\hat\uv_t)}}
								{\Esub{\polv{\thetav}, \hat\fv}{\exp\left( -\frac{1}{\lambda} C(\hat\xv_t, \hat\uv_t) \right)}}.
\label{eq:grad exponentiated utility (MPC obj))}
\end{align}
\end{small}
The loss function in~\eqref{eq:exponentiated utility (MPC obj)} is also known as the risk-seeking objective in optimal control~\citep{van2010risk}; this classical interpretation is based on a Taylor expansion of~\eqref{eq:exponentiated utility (MPC obj)} showing 
\begin{align*}
\lambda
\ell_t(\thetav) \approx \Esub{\polv{\thetav}, \hat\fv}{C(\hat\xv_t, \uvhat_t)} - \frac{1}{\lambda} \Vbb_{\polv{\thetav}, \hat\fv}[ C(\hat\xv_t, \uvhat_t) ]
\end{align*}
when $\lambda$ is large, where $ \Vbb_{\polv{\thetav}, \hat\fv}[ C(\hat\xv_t, \uvhat_t) ]$ is the variance of $C(\hat\xv_t, \uvhat_t)$.
Here we derive~\eqref{eq:exponentiated utility (MPC obj)} from a different perspective that treats it as a continuous approximation of~\eqref{eq:low cost probability (MPC obj)}.
The use of exponential transformations to approximate indicators is a common machine-learning trick (like the Chernoff bound~\citep{chernoff1952measure}).

\begin{figure}[t]
	\centering
	\begin{subfigure}[b]{0.24\textwidth}
		\includegraphics[width=\textwidth,trim={0 0.5cm 0 0},clip]{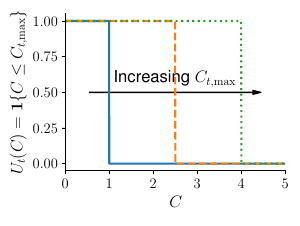}
		\caption{Threshold utility}
		\label{fig:indicator}
	\end{subfigure}
	\begin{subfigure}[b]{0.24\textwidth}
		\includegraphics[width=\textwidth,trim={0 0.5cm 0 0},clip]{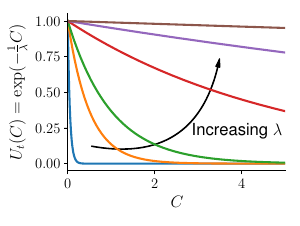}
		\caption{Exponential utility}
		\label{fig:exponential}
	\end{subfigure}
	\caption{Visualization of different utilities.}
	\label{fig:utilities}
\end{figure}

\subsection{Algorithms} \label{sec:algorithms}
We instantiate \ouralg with different choices of loss function, control distribution, and Bregman divergence 
as concrete examples to showcase the flexibility of our framework. 
In particular, we are able to recover well-known MPC algorithms as special cases of Algorithm~\ref{alg:dmd-mpc}.

Our discussions below are organized based on the class of Bregman divergences used in~\eqref{eq:DMD}, and the following algorithms are derived assuming that the control distribution is a sequence of independent distributions.
That is, we suppose $\polv{\thetav}$ is a probability density/mass function that factorizes as 
\begin{align} \label{eq:independent control distribution} 
\polv{\thetav}(\uvhat_t) = \prod_{h=0}^{H-1} \pol{\theta_{h}}(\uhat_{t,h}),
\end{align}
and $\thetav = (\theta_0, \theta_{1}, \ldots, \theta_{H-1})$ for some \emph{basic control distribution} $\pol{\theta}$ parameterized by $\theta \in \Theta$, where $\Theta$ denotes the feasible set for the basic control distribution.
For control distributions in the form of~\eqref{eq:independent control distribution}, 
the shift operator $\Phi$ in~\eqref{eq:shift operation} would set $\tilde{\thetav}_t$ by identifying $\tilde{\theta}_{t,h} = \theta_{t-1,h+1}$ for $h = 0, \dots, H-2$, 
and initializing the final parameter as either $\tilde{\theta}_{t,H-1} = \tilde{\theta}_{t,H-2}$ or $\tilde{\theta}_{t,H-1} = \bar\theta$ for some default parameter $\bar\theta$. 

\subsubsection{Quadratic Divergence}

We start with perhaps the most common Bregman divergence: the quadratic divergence.
That is, we suppose the Bregman divergence in~\eqref{eq:DMD} has a quadratic form
\footnote{This is generated by defining $\psi(\thetav) \triangleq \onehalf \thetav\T \Av \thetav$.}
$
\bregman{\psi}{\thetav}{\thetav'} \triangleq \frac{1}{2}(\thetav-\thetav')\T\Av(\thetav-\thetav')
$ for some positive-definite matrix $\Av$.
Below we discuss different choices of $\Av$ and their corresponding update rules.

\paragraph{Projected Gradient Descent}
This basic update rule is a special case when $\Av$ is the identity matrix. Equivalently, the update can be written as
$\thetav_t = \argmin_{\thetav \in \Thetav}\, \norm{\thetav - (\thetatildev_t - \gamma_t \gv_t)}^2 $. 

\paragraph{Natural Gradient Descent}
We can recover the natural gradient descent algorithm~\citep{Akimoto-CMA} by defining $\Av = \fisher(\tilde\thetav_t)$
where 
\[
\fisher(\tilde\thetav_t) = \Esub{\polv{\tilde\thetav_t}}{\nabla_{\tilde\thetav_t} \log \polv{\tilde\thetav_t}(\uvhat_t) \nabla_{\tilde\thetav_t} \log \polv{\tilde\thetav_t}(\uvhat_t)\T}
\]
is the Fisher information matrix.
This rule uses the natural Riemannian metric of distributions to normalize the effects of different parameterizations of the same distribution~\citep{rattray1998natural}.

\paragraph{Quadratic Problems}
While the above two update rules are quite general, we can further specialize the Bregman divergence to achieve faster learning when the per-round loss function can be shown to be quadratic.
This happens, for instance, when the MPC problem in~\eqref{eq:mpc_obj} is an LQR or LEQR problem\footnote{The dynamics model $\fhat$ is linear, the step cost $c$ is quadratic, the per-round loss $\ell_t$ is~\eqref{eq:expected cost (MPC obj)}, and the basic control distribution is a Dirac-delta distribution.}~\citep{duncan2013linear}. 
That is, if
\[
\ell_t(\thetav) = \onehalf \thetav\T \Rv_t \thetav + \rv_t\T \thetav + \const
\]
for some constant vector $\rv_t$ and positive definite matrix $\Rv_t$,
we can set $\Av = \Rv_t$ and $\gamma_t = 1$, making $\thetav_t$ given by the first step of~\eqref{eq:DMD}  correspond to the optimal solution to $\ell_t$ (i.e., the solution of LQR/LEQR).
The particular values of $\Rv_t$ and $\rv_t$ for each of LQR and LEQR are derived in~\cref{app:lqr and leqr}.

\subsubsection{KL Divergence and the Exponential Family} \label{sec:KL and exponential family}

We show that for control distributions in the exponential family~\citep{ExpFamily}, the Bregman divergence in~\eqref{eq:DMD} can be set to the KL divergence, which is a natural way to measure distances between distributions. 
Toward this end, we review the basics of the exponential family. 
We say a distribution $p_{\eta}$ with natural parameter $\eta$ of random variable $u$ belongs to the \emph{exponential family} if its probability density/mass function satisfies
$	p_{\eta}(u) = \rho(u) \exp \left( \inner{\eta}{\phi(u)} - A(\eta) \right)$, 
where $\phi(u)$ is the sufficient statistics, $\rho(u)$ is the carrier measure, and $
	A(\eta) = \log \int \rho(u) \exp(\inner{\eta}{\phi(u)}) \d{u}
$ is the log-partition function.
The distribution $p_{\eta}$ can also be described by its expectation parameter $\mu \triangleq \Esub{p_{\eta}}{\phi(u)}$, and there is a duality between the two parameterizations:
$
\mu = \nabla A(\eta)
 \text{ and }
\eta = \nabla A^*(\mu)
$, 
where $
A^*(\mu) = \sup_{\eta \in \HH} \, \inner{\eta}{\mu} - A(\eta)
$ is the Legendre transformation of $A$ and $\HH = \{ \eta : A(\eta) < +\infty \}$. That is, $\nabla A = (\nabla A^*)^{-1}$.
The duality results in the property below.
\begin{fact}  \label{fc:KL divergence}
{\normalfont\citep{ExpFamily}}~\;
$
	\KL{p_{\eta}}{p_{\eta'}} = \bregman{A}{\eta'}{\eta} =\bregman{A^*}{\mu}{\mu'}
$.
\end{fact}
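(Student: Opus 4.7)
The plan is to establish the two equalities separately, first the KL-Bregman identity and then the Legendre duality identity, both by direct computation using the exponential-family structure.

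For the first equality, I would begin by writing out the KL divergence explicitly using the exponential-family form of $p_\eta$ and $p_{\eta'}$. Taking the log ratio,
\begin{equation*}
\log p_\eta(u) - \log p_{\eta'}(u) = \inner{\eta - \eta'}{\phi(u)} - A(\eta) + A(\eta'),
\end{equation*}
since the carrier measures cancel. Taking expectation under $p_\eta$ and using the definition $\mu = \Esub{p_\eta}{\phi(u)} = \nabla A(\eta)$ gives
\begin{equation*}
\KL{p_\eta}{p_{\eta'}} = \inner{\eta - \eta'}{\mu} - A(\eta) + A(\eta').
\end{equation*}
Next I expand $\bregman{A}{\eta'}{\eta} = A(\eta') - A(\eta) - \inner{\nabla A(\eta)}{\eta' - \eta}$ and substitute $\nabla A(\eta) = \mu$ to see the right-hand side matches. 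The argument ordering (note that $\eta$ appears as the second argument of the Bregman divergence) is the one subtle point, but it just reflects that the expectation is taken under $p_\eta$.

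For the second equality $\bregman{A}{\eta'}{\eta} = \bregman{A^*}{\mu}{\mu'}$, I would invoke the Fenchel--Young identity, which at dual pairs $(\eta,\mu)$ and $(\eta',\mu')$ reads
\begin{equation*}
A(\eta) + A^*(\mu) = \inner{\eta}{\mu}, \qquad A(\eta') + A^*(\mu') = \inner{\eta'}{\mu'}.
\end{equation*}
Expanding $\bregman{A^*}{\mu}{\mu'} = A^*(\mu) - A^*(\mu') - \inner{\eta'}{\mu - \mu'}$ using $\nabla A^*(\mu') = \eta'$, and substituting the Fenchel--Young equalities to eliminate $A^*$, yields
\begin{equation*}
\bregman{A^*}{\mu}{\mu'} = \inner{\eta - \eta'}{\mu} - A(\eta) + A(\eta'),
\end{equation*}
which is exactly the expression already obtained for $\bregman{A}{\eta'}{\eta}$.

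The proof is essentially pure bookkeeping once the two ingredients are in hand: (i) the log density has an affine dependence on $\eta$ with slope $\phi(u)$ and intercept $-A(\eta)$, and (ii) $A$ and $A^*$ are Legendre duals with $\nabla A$ and $\nabla A^*$ being mutual inverses on the relevant domains. The only place where one must be careful is tracking which argument of the Bregman divergence is the ``base point'' (where the gradient is evaluated); the asymmetry of $D$ is precisely what makes the mapping $\eta \leftrightarrow \mu$ swap the two slots, which is why $\bregman{A}{\eta'}{\eta}$ matches $\bregman{A^*}{\mu}{\mu'}$ rather than $\bregman{A^*}{\mu'}{\mu}$.
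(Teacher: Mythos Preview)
Your proof is correct and is the standard argument for this well-known identity. Note, however, that the paper does not actually prove \cref{fc:KL divergence}; it simply cites it as a known fact from the exponential-family literature, so there is no ``paper's own proof'' to compare against. Your derivation would serve perfectly well as a self-contained verification.
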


We can use~\cref{fc:KL divergence} to define the Bregman divergence in~\eqref{eq:DMD} to optimize a control distribution $\polv{\thetav}$ in the exponential family:
\begin{itemize}
\item if $\thetav$ is an expectation parameter, we can set~\\$\bregman{\psi}{\thetav}{\tilde\thetav_t} \triangleq \KL{\polv{\thetav}}{\polv{\tilde\thetav_t}}$, or
\item if $\thetav$ is a natural parameter,  we can set~\\$\bregman{\psi}{\thetav}{\tilde\thetav_t} \triangleq \KL{\polv{\tilde\thetav_t}}{\polv{\thetav}}$.
\end{itemize}
We demonstrate some examples using this idea below.

\paragraph{Expectation Parameters and Categorical Distributions}

We first discuss the case where $\thetav$ is an expectation parameter and the first step in~\eqref{eq:DMD} is
\begin{align} \label{eq:KL-proximal update with expectation parameter}
\thetav_t  &= \argmin_{\thetav \in \Thetav}\, \inner{\gamma_t \gv_t}{\thetav} + \KL{\polv{\thetav}}{\polv{\tilde\thetav_t}}.
\end{align}

To illustrate, we consider an MPC problem with a \emph{discrete} control space $\{ 1, 2, \ldots, m \}$ and use the categorical distribution as the basic control distribution in~\eqref{eq:independent control distribution}, i.e., we set $\pol{\theta_h} = \cat(\theta_h)$, where $\theta_{h} \in \Delta^m$ is the probability of choosing each control among $\{ 1, 2, \ldots, m \}$ at the $h$\th predicted time step and $\Delta^m$ denotes the probability simplex in $\R^m$.
This parameterization choice makes $\thetav$ an expectation parameter of $\polv{\thetav}$ that corresponds to sufficient statistics given by indicator functions.
With the structure of~\eqref{eq:likelihood-ratio}, the update direction is
\[
g_{t,h} = \Esub{\polv{\tilde\thetav_t}, \hat\fv}{L_t(\hat\xv_t, \hat\uv_t) e_{\uhat_{t,h}} \oslash \tilde\theta_{t,h} } \quad (h = 0, 1, \dots, H-1)
\]
where $\tilde\theta_{t,h}$ and $g_{t,h}$ are the $h$\th elements of $\tilde\thetav_t$ and $\gv_t$, respectively, $e_{\uhat_{t,h}} \in \R^m$ has $0$ for each element except at index $\uhat_{t,h}$ where it is $1$, and $\oslash$ denotes elementwise division.
Update~\eqref{eq:KL-proximal update with expectation parameter} then becomes the exponentiated gradient algorithm~\cite{OCO}:
\begin{small}
\begin{equation}
\theta_{t,h} = \frac{1}{Z_{t,h}} \tilde{\theta}_{t,h} \odot \exp(-\gamma_t g_{t,h}) \quad (h = 0, 1, \ldots, H-1)
\label{eq:eg} 
\end{equation}
\end{small}
where $\theta_{t,h}$ is the $h$\th element of $\thetav_t$, $Z_{t,h}$ is the normalizer for $\theta_{t,h}$, and $\odot$ denotes elementwise multiplication.
That is, instead of applying an additive gradient step to the parameters, the update in~\eqref{eq:KL-proximal update with expectation parameter} exponentiates the gradient and performs elementwise multiplication.
This does a better job of accounting for the geometry of the problem, and makes projection a simple operation of normalizing a distribution.

\paragraph{Natural Parameters and Gaussian Distributions}
Alternatively, we can set $\thetav$ as a natural parameter and use
\begin{align} \label{eq:KL-proximal update with natural parameter}
\thetav_t  &= \argmin_{\thetav \in \Thetav}\, \inner{\gamma_t \gv_t}{\thetav} + \KL{\polv{\tilde\thetav_t}}{\polv{\thetav}}
\end{align}
as the first step in~\eqref{eq:DMD}. 
In particular, we show that, with~\eqref{eq:KL-proximal update with natural parameter}, the structure of the likelihood-ratio derivative in~\eqref{eq:likelihood-ratio} can be leveraged to design an efficient update. 
The main idea follows from the observation that when the gradient is computed through~\eqref{eq:likelihood-ratio} and $\tilde\thetav_t$ is the natural parameter, we can write 
\begin{align} \label{eq:a nice result of natural parameter and likelihood-ratio derivative}
\gv_t = \nabla \ell_t (\tilde{\thetav}_t)
= \Esub{\polv{\tilde\thetav_t}, \hat\fv}{L_t(\hat\xv_t, \uvhat_t) (\phi(\uvhat_t) - \tilde\muv_t)}
\end{align}
where $\tilde\muv_t$ is the expectation parameter of $\tilde\thetav_t$ and $\phi$ is the sufficient statistics of the control distribution.
We combine the factorization in~\eqref{eq:a nice result of natural parameter and likelihood-ratio derivative} with a property of the proximal update below (proven in~\cref{app:proofs}) to derive our algorithm.
\begin{prop}\label{pr:proximal update}
	Let $g_t$ be an update direction.
	Let $\MM$ be the image of $\HH$ under $\nabla A$. 
	If $\mu_t - \gamma_t g_t \in \MM$ and  $
	\eta_{t+1} = \argmin_{\eta \in \HH}\, \inner{\gamma_t g_t}{\eta} + \bregman{A}{\eta}{\eta_t}$, then $\mu_{t+1} = \mu_t - \gamma_t g_t$.\footnote{A similar proposition can be found for~\eqref{eq:KL-proximal update with expectation parameter}.}
\end{prop}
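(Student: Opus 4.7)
The plan is to attack the first-order optimality condition of the proximal subproblem directly, exploiting the duality $\mu = \nabla A(\eta)$ between natural and expectation parameters. First I would expand the Bregman divergence using its definition to write the objective as
\[
F(\eta) \triangleq \inner{\gamma_t g_t}{\eta} + A(\eta) - A(\eta_t) - \inner{\nabla A(\eta_t)}{\eta - \eta_t}.
\]
Because $A$ is convex (it is a log-partition function) and the linear terms are smooth, $F$ is convex in $\eta$, so a stationary point characterizes the minimizer over the open domain $\HH$.

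Next I would compute $\nabla F(\eta) = \gamma_t g_t + \nabla A(\eta) - \nabla A(\eta_t)$ and substitute $\mu = \nabla A(\eta)$ and $\mu_t = \nabla A(\eta_t)$ using the duality recalled just before \cref{fc:KL divergence}. Setting the gradient to zero gives the candidate relation $\mu - \mu_t + \gamma_t g_t = 0$, i.e., $\mu = \mu_t - \gamma_t g_t$. The assumption $\mu_t - \gamma_t g_t \in \MM$ ensures that this candidate expectation parameter is in the image of $\HH$ under $\nabla A$, so there exists $\eta^\star \in \HH$ with $\nabla A(\eta^\star) = \mu_t - \gamma_t g_t$; by convexity of $F$, this $\eta^\star$ is the unique minimizer over $\HH$, hence $\eta_{t+1} = \eta^\star$ and therefore $\mu_{t+1} = \nabla A(\eta_{t+1}) = \mu_t - \gamma_t g_t$.

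The main subtlety I expect is the feasibility condition: the proximal subproblem is defined on $\HH$ (where $A$ is finite) rather than on all of $\R^d$, so I need to verify that the stationary point found above actually lies in $\HH$ rather than on its boundary, which would force a different optimality condition. This is exactly what the hypothesis $\mu_t - \gamma_t g_t \in \MM$ buys us: since $\MM = \nabla A(\HH)$ and $\nabla A$ is a bijection between $\HH$ and $\MM$ (its inverse being $\nabla A^*$), the preimage $\eta^\star = \nabla A^*(\mu_t - \gamma_t g_t)$ is a bona fide element of $\HH$, legitimizing the unconstrained first-order argument. No additional regularity beyond the standard exponential-family facts recalled in the paper is needed, so the proof should be short once the optimality condition is set up.
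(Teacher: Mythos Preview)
Your proposal is correct and follows essentially the same approach as the paper: expand the Bregman divergence, strip away the constant terms, and use convex duality together with the hypothesis $\mu_t - \gamma_t g_t \in \MM$ to conclude. The only cosmetic difference is that the paper rewrites the minimization as $\argmax_{\eta \in \HH}\, \inner{\mu_t - \gamma_t g_t}{\eta} - A(\eta)$ and recognizes this as the defining supremum of the Legendre transform, yielding $\eta_{t+1} = \nabla A^*(\mu_t - \gamma_t g_t)$ directly, whereas you reach the same point by setting $\nabla F = 0$; these are two phrasings of the same first-order argument.
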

\noindent 
We find that, under the assumption\footnote{If $\mu_t - \gamma_t g_t$ is not in $\MM$, the update in~\eqref{eq:KL-proximal update with natural parameter} needs to perform a projection, the form of which is algorithm dependent.} in \cref{pr:proximal update}, the update rule in~\eqref{eq:KL-proximal update with natural parameter} becomes
\begin{align}\label{eq:a simple update rule (convex form)}
\muv_{t+1} = (1-\gamma_t)\tilde\muv_t + \gamma_t \Esub{\polv{\tilde{\thetav}_t}, \hat\fv}{L_t(\hat\xv_t, \uvhat_t)\phi(\uvhat_t)}.
\end{align}
In other words, when $\gamma_t \in [0,1]$, the update to the expectation parameter $\muv_t$ in~\eqref{eq:DMD} is simply a convex combination of the sufficient statistics and the previous expectation parameter $\tilde\muv_t$.

We provide a concrete example of an MPC algorithm that follows from~\eqref{eq:a simple update rule (convex form)}. Let us consider a continuous control space and use the Gaussian distribution as the basic control distribution in~\eqref{eq:independent control distribution}, i.e., we set 
$\pol{\theta_h}(\uhat_{t,h}) = \NN(\uhat_{t,h}; m_{h}, \Sigma_{h})$ for some mean vector $m_h$ and covariance matrix $\Sigma_h$. For $\pol{\theta_h}$, we can choose sufficient statistics $\phi(\uhat_{t,h}) = (\uhat_{t,h}, \uhat_{t,h}\uhat_{t,h}\T)$, which results in the expectation parameter $\mu_h = (m_h, S_h)$ and the natural parameter $\eta_h = (\Sigma_h^{-1} m_h, -\frac{1}{2} \Sigma_h^{-1} )$, where $S_h \triangleq \Sigma_h + m_h m_h\T$ is the second moment of $\pol{\theta_h}$. 
Let us set $\theta_h$ as the natural parameter. Then~\eqref{eq:KL-proximal update with natural parameter} is equivalent to the update rule for $h=0,\dots, H-1$:
\begin{align} \label{eq:Gaussian update in expectation parameter}
\begin{split}
m_{t,h} &= (1-\gamma_t)\tilde{m}_{t,h} + \gamma_t \Esub{\polv{\thetatildev_t}, \hat\fv}{L_t(\hat\xv_t, \uvhat_t) \uhat_{t,h} } \\
S_{t,h} &= (1-\gamma_t)\tilde{S}_{t,h} + \gamma_t \Esub{\polv{\thetatildev_t}, \hat\fv}{L_t(\hat\xv_t, \uvhat_t) \uhat_{t,h} \uhat_{t,h}\T}.
\end{split}
\end{align}

Several existing algorithms are special cases of~\eqref{eq:Gaussian update in expectation parameter}.
\begin{itemize}
	\item \textit{Cross-entropy method (CEM)}~\cite{CEM}:\\	
	If $\ell_t$ is set to~\eqref{eq:low cost probability (MPC obj)} and 	 $\gamma_t = 1$, then~\eqref{eq:Gaussian update in expectation parameter} becomes 
	\begin{align} \label{eq:update of Gaussian with indicator cost}
	\begin{split}
	m_{t,h} &= \frac{\Esub{\polv{\thetatildev_t}, \hat\fv}{\indicator{C(\hat\xv_t, \uvhat_t) \le C_{t, \max}} \uhat_{t,h}}}{\Esub{\polv{\thetatildev_t}, \hat\fv}{\indicator{C(\hat\xv_t, \uvhat_t) \le C_{t, \max}}}} \\
	S_{t,h} &= \frac{\Esub{\polv{\thetatildev_t}, \hat\fv}{\indicator{C(\hat\xv_t, \uvhat_t) \le C_{t, \max}} \uhat_{t,h} \uhat_{t,h}\T }}{\Esub{\polv{\thetatildev_t}, \hat\fv}{\indicator{C(\hat\xv_t, \uvhat_t) \le C_{t, \max}}}},
	\end{split}
	\end{align}
	which matches the update rule of the cross-entropy method for Gaussian distributions~\cite{CEM}.\footnote{Though CEM is typically presented as updating the mean and \emph{covariance} of a Gaussian distribution, the update rule is derived by matching the first and second moments between the Gaussian distribution and a uniform distribution over trajectories whose costs are at most $C_{t, \max}$, which is identical to~\eqref{eq:update of Gaussian with indicator cost}.}	

\item \textit{Model-predictive path integral (MPPI)}~\cite{Williams-MPPI}:

If we choose $\ell_t$ as the exponential utility, as in~\eqref{eq:exponentiated utility (MPC obj)}, and do not update the covariance, the update rule becomes
\begin{align} \label{eq:update of Gaussian with exponentiated cost}
m_{t,h} = (1 - \gamma_t)\tilde{m}_{t,h}
+ \gamma_t \frac{\Esub{\polv{\thetatildev_t}, \hat\fv}{e^{-\frac{1}{\lambda} C(\hat\xv_t, \uvhat_t)} \uhat_{t,h}}}
				{\Esub{\polv{\thetatildev_t}, \hat\fv}{e^{-\frac{1}{\lambda} C(\hat\xv_t, \uvhat_t)}}},
\end{align}
which reduces to the MPPI update rule~\cite{Williams-MPPI} for $\gamma_t = 1$. This connection is also noted in~\cite{AccMPPI}.
\end{itemize}

\subsection{Extensions}
In the previous sections, we discussed multiple instantiations of \ouralg, showing the flexibility of our framework. But they are by no means exhaustive.
In \cref{app:variations}, we discuss variations of \ouralg, e.g., imposing constraints and different ways to approximate the expectation in~\eqref{eq:likelihood-ratio}.

\section{Related Work} \label{sec:related}
Recent work on MPC has studied sampling-based approaches, which are flexible in that they do not require differentiability of a cost function.
One such algorithm which can be used with general cost functions and dynamics is MPPI, which was proposed by \citet{Williams-MPPI} as a generalization of the control affine case \cite{Williams-Aggressive}.
The algorithm is derived by considering an optimal control distribution defined by the control problem.
This optimal distribution is intractable to sample from, so the algorithm instead tries to bring a tractable distribution (in this case, Gaussian with fixed covariance) as close as possible in the sense of KL divergence.
This ends up being the same as finding the mean of the optimal control distribution. The mean is then approximated as a weighted sum of sampled control trajectories, where the weight is determined by the exponentiated costs.
Although this algorithm works well in practice (including a robust variant~\citep{williams2018robust} achieving state-of-the-art performance in aggressive driving~\citep{drews2019vision}), it is not clear that matching the mean of the distribution should guarantee good performance, such as in the case of a multimodal optimal distribution.
By contrast, our update rule in~\eqref{eq:update of Gaussian with exponentiated cost} results from optimizing an exponential utility.

A closely related approach is the cross-entropy method (CEM) \cite{CEM}, which also assumes a Gaussian sampling distribution but minimizes the KL divergence between the Gaussian distribution and a uniform distribution over low cost samples.
CEM has found applicability in reinforcement learning \cite{Mannor-CEM-Policy-Search,Menache-CEM-TD,NoisyCEM}, motion planning \cite{Helvik-CEM, Kobilarov-CEM-MP}, and MPC \cite{chua2018deep,Williams-IT-MPC}.

These sampling-based control algorithms can be considered special cases of general derivative-free optimization algorithms, such as covariance matrix adaptation evolutionary strategies (CMA-ES) \cite{Hansen-CMA-ES} and natural evolutionary strategies (NES) \cite{Wierstra-NES}.
CMA-ES samples points from a multivariate Gaussian, evaluates their fitness, and adapts the mean and covariance of the sampling distribution accordingly.
On the other hand, NES optimizes the parameters of the sampling distribution to maximize some expected fitness through steepest ascent, where the direction is provided by the natural gradient.
\citet{Akimoto-CMA} showed that CMA-ES can also be interpreted as taking a natural gradient step on the parameters of the sampling distribution. 
As we showed in~\cref{sec:algorithms}, natural gradient descent is a special case of~\ouralg framework.
A similar observation that connects between MPPI and mirror descent  was made by~\citet{AccMPPI}, but their derivation is limited to the KL divergence and Gaussian case.

\section{Experiments} \label{sec:exp}
We use experiments to the validate the flexibility of \ouralg. We show that this framework can handle both continuous (Gaussian distribution) and discrete (categorial distribution) variations of control problems, and that MPC algorithms like MPPI and CEM can be generalized using different step sizes and control distributions to improve performance. Extra details and results are included in~\cref{app:exp details,app:exp results}.

\subsection{Cartpole} \label{sec:cartpole}
We first consider the classic cartpole problem where we seek to swing a pole upright and keep it balanced only using actuation on the attached cart.
We consider both the continuous and discrete control variants.
For the continuous case, we choose the Gaussian distribution as the control distribution and keep the covariance fixed.
For the discrete case, we choose the categorical distribution and use update \eqref{eq:eg}.
In either case, we have access to a biased stochastic model (uses a different pole length compared to the real cart).

\begin{figure}[t!]
	\centering
	\begin{subfigure}[b]{0.49\textwidth}
		\centering
		\includegraphics[width=\textwidth]{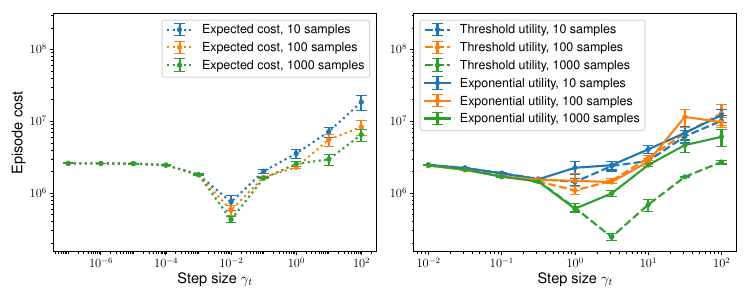}
		\caption{Continuous controls}
		\label{fig:cont_step}
	\end{subfigure}
	\begin{subfigure}[b]{0.49\textwidth}
		\centering
		\includegraphics[width=\textwidth]{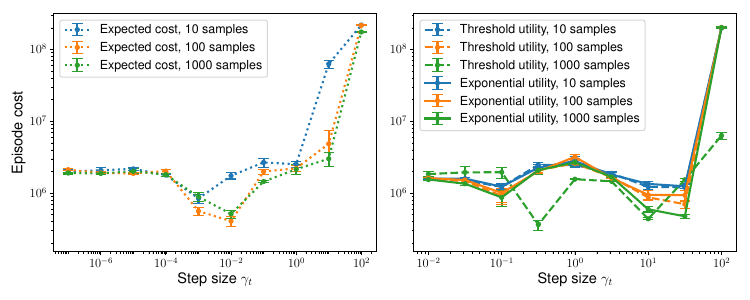}
		\caption{Discrete controls}
		\label{fig:disc_step}
	\end{subfigure}
	\caption{Varying step size and number of samples (same legends for (a) and (b)). EC = expected cost~\eqref{eq:expected cost (MPC obj)}. PLC = probability of low cost~\eqref{eq:low cost probability (MPC obj)} with elite fraction $= 10^{-3}$. EU = exponential utility~\eqref{eq:exponentiated utility (MPC obj)} with $\lambda = 1$.}
	\label{fig:step}
\end{figure}
~
\begin{figure}[h!]
	\centering
	\begin{subfigure}[b]{0.49\textwidth}
		\centering
		\includegraphics[width=\textwidth]{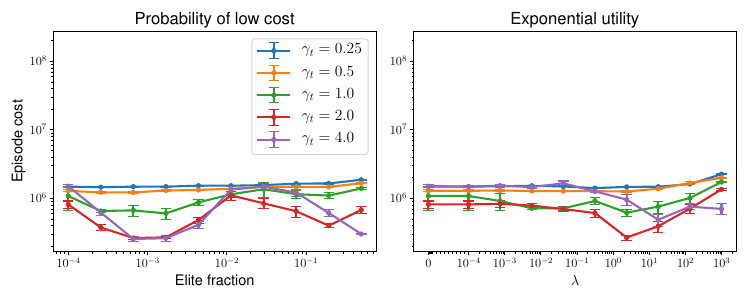}
		\caption{Continuous controls}
		\label{fig:cont_trans}
	\end{subfigure}
	\begin{subfigure}[b]{0.49\textwidth}
		\centering
		\includegraphics[width=\textwidth]{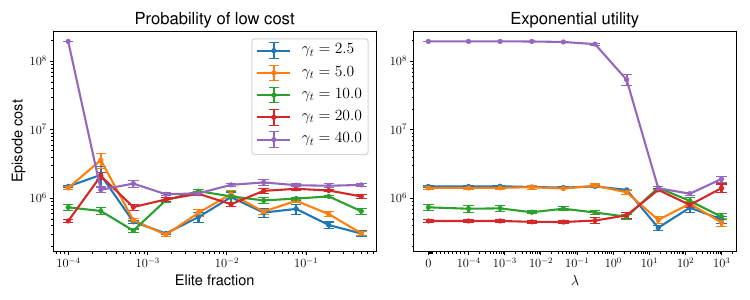}
		\caption{Discrete controls}
		\label{fig:disc_trans}
	\end{subfigure}
	\caption{Varying loss parameter and step size (1000 samples).}
	\label{fig:trans}
\end{figure}

We consider the interaction between the choice of loss,
 step size, and number of samples used to estimate~\eqref{eq:likelihood-ratio},\footnote{For our experiments, we vary the number of samples from $\polv{\thetav}$ and fix the number of samples from $\hat\fv$ to ten. Furthermore, we use common random numbers when sampling from $\hat\fv$ to reduce estimation variance.} shown in~\cref{fig:step,fig:trans}.
For this environment, we can achieve low cost when optimizing the expected cost in~\eqref{eq:expected cost (MPC obj)} with a proper step size ($10^{-2}$ for both continuous and discrete problems) while being fairly robust to the number of samples.
When using either of the utilities, the number of samples is more crucial in the continuous domain, with more samples allowing for larger step sizes.
In the discrete domain~(\cref{fig:disc_step}), performance is largely unaffected by the number of samples when the step size is below $10$, excluding the threshold utility with 1000 samples.
In~\cref{fig:cont_trans}, for a large range of utility parameters, we see that using step sizes above $1$ (the step size set in MPPI and CEM) give significant performance gains.
In~\cref{fig:disc_trans}, there's a more complicated interaction between the utility parameter and step size, with huge changes in cost when altering the utility parameter and keeping the step size fixed.

\subsection{AutoRally} \label{sec:autorally}
\subsubsection{Platform Description}

We use the autonomous AutoRally platform~\citep{AutoRally} to run a high-speed driving task on a dirt track, with the goal of the task to achieve as low a lap time as possible.
The robot~(\cref{fig:car}) is a 1:5 scale RC chassis capable of driving over $20~\mathrm{m/s}$ ($45~\mathrm{mph}$) and has a desktop-class Intel Core i7 CPU and Nvidia GTX 1050 Ti GPU.
Our code for the control algorithm is based on modifications of code available on the AutoRally repository.\footnote{\href{https://github.com/AutoRally/autorally}{\color{blue}\texttt{https://github.com/AutoRally/autorally}}}
For real-world experiments, we estimate the car's pose using a particle filter from~\citep{drews2019vision} which relies on a monocular camera, IMU, and GPS.
In both simulated and real-world experiments, the dynamics model is a neural network which has been fitted to data collected from human demonstrations.
We note that the dynamics model is deterministic, so we don't need to estimate any expectations with respect to the dynamics.

\begin{figure}
	\centering
	\includegraphics[width=0.4\textwidth]{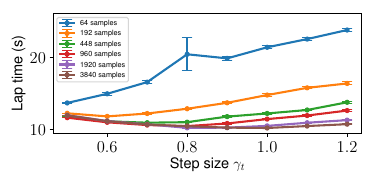}
	\caption{Simulated AutoRally performance with different step sizes and number of samples. Though many samples coupled with large steps  yield the smallest lap times, the performance gains are small past 1920 samples. With fewer samples, a lower step size helps recover some lost performance.}
	\label{fig:gazebo comparison}
\end{figure}

\subsubsection{Simulated Experiments} \label{subsec:simulated experiments}

We first use the Gazebo simulator~(\cref{fig:gazebo} in~\cref{app:autorally details}) from the AutoRally repo to perform a sweep of algorithm parameters, particularly the step size and number of samples, to evaluate how changing these parameters can affect the performance of~\ouralg. 
For all of the experiments, the control distribution is a Gaussian with fixed covariance, and we use update~\eqref{eq:update of Gaussian with exponentiated cost} (i.e., the loss is the exponential utility~\eqref{eq:exponentiated utility (MPC obj)}) with $\lambda = 6.67$.
The resulting lap times are shown in~\cref{fig:gazebo comparison}.\footnote{The large error bar for 64 samples and step size of 0.8 is due to one particular lap where the car stalled at a turn for about 60 seconds.}
We see that although using more samples does result in smaller lap times, there are diminishing returns past 1920 samples per gradient.
Indeed, with a proper step size, even as few as 192 samples can yield lap times within a couple seconds of 3840 samples and a step size of 1.
We also observe that the curves converge as the step size decreases further, implying that only a certain number of samples are needed for a given step size.
This is a particularly important advantage of~\ouralg over methods like MPPI: by changing the step size,~\ouralg can perform much more effectively with fewer samples, making it a good choice for embedded systems which can't produce many samples due to computational constraints.

\begin{figure}[t]
	\centering
	\includegraphics[width=0.3\textwidth]{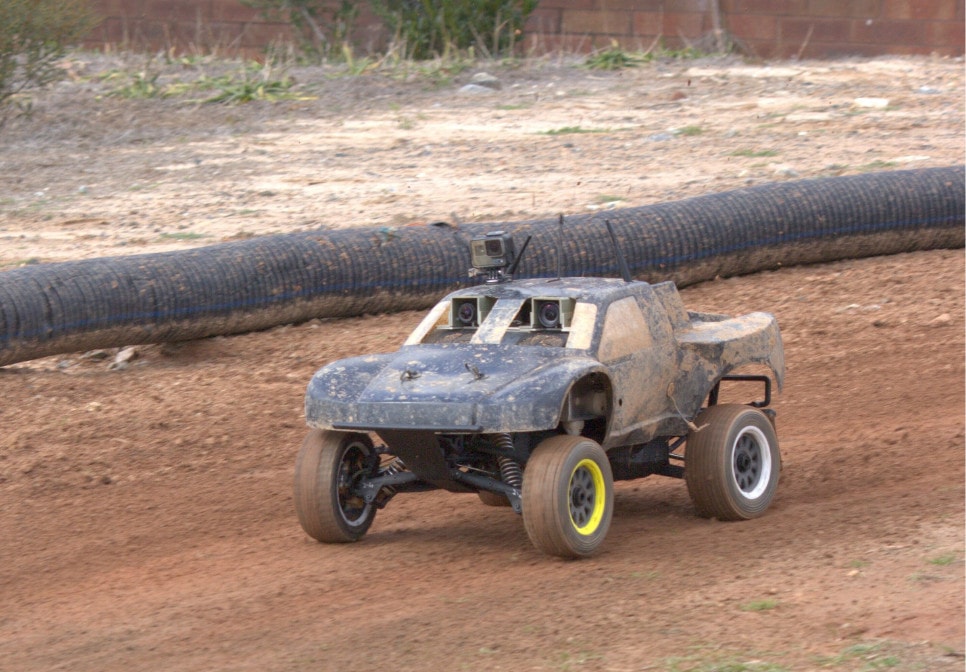}	
	\caption{Rally car driving during an experiment.}
	\label{fig:car}	
\end{figure}

\begin{figure}
	\centering
	\includegraphics[width=0.4\textwidth]{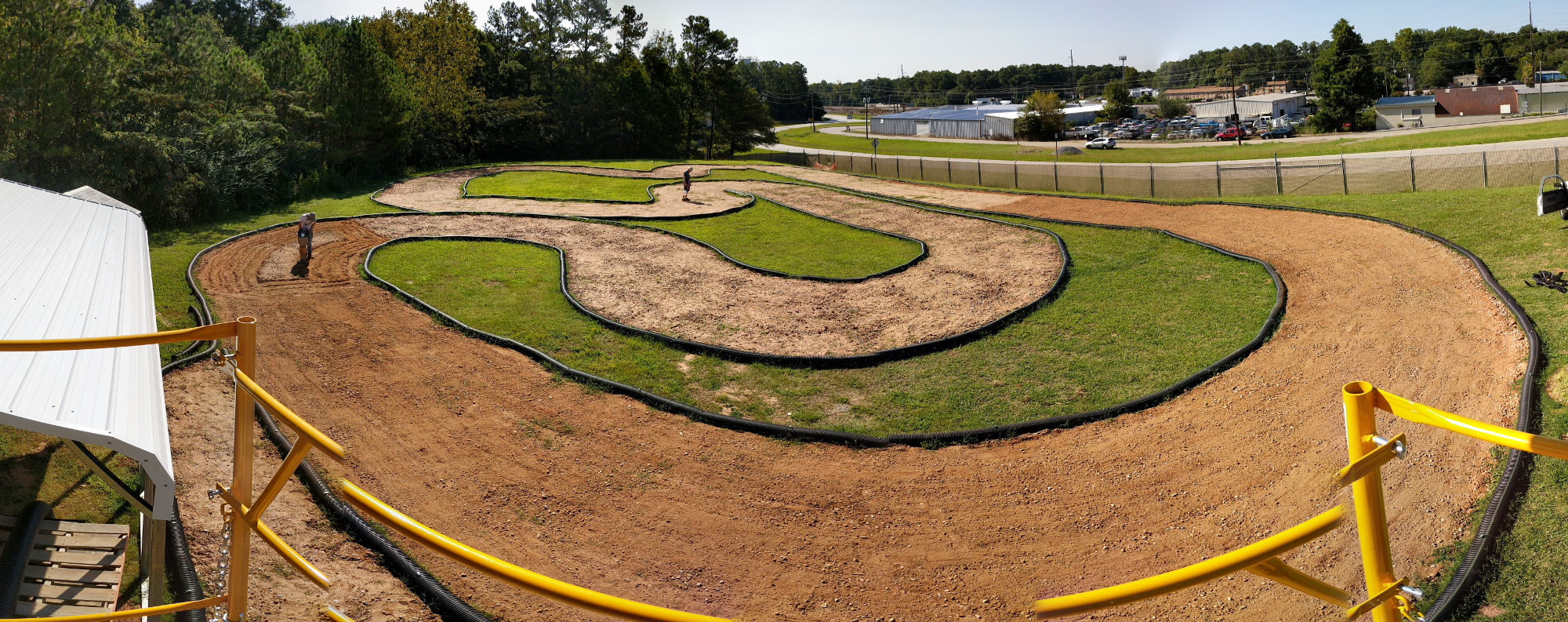}
	\caption{Real-world AutoRally task.}
	\label{fig:real}
\end{figure}

\begin{table*}[ht]
	\centering
	\caption{Statistics for real-world experiments at target of $9~\mathrm{m/s}$.}
	\label{tab:real world slow}
	\begin{tabular}{c|c V{3} c|c|c}
		Samples & Step size $\gamma_t$ & Lap time ($\mathrm{s}$) & Avg. speed ($\mathrm{m/s}$) & Max speed ($\mathrm{m/s}$) \\ \specialrule{0.15em}{0em}{0em}
		$1920$  & $1$                  & $31.76 \pm 0.55$        & $5.70 \pm 0.16$             & $9.21 \pm 0.30$            \\ \cline{2-5} 
		& $0.8$                & $31.81 \pm 0.21$                & $5.75 \pm 0.03$             & $9.03 \pm 0.19$    \\ \cline{2-5}
		& $0.6$                & $32.83 \pm 0.31$        & $5.60 \pm 0.05$             & $8.62 \pm 0.12$            \\ \specialrule{0.15em}{0em}{0em}
		$64$    & $1$                  & $33.74 \pm 0.78$        & $5.45 \pm 0.16$             & $9.50 \pm 0.22$            \\ \cline{2-5}
		& $0.8$                & $33.84 \pm 0.80$                & $5.46 \pm 0.11$             & $9.12 \pm 0.26$            \\ \cline{2-5} 
		& $0.6$                & $33.61 \pm 0.74$        & $5.50 \pm 0.13$             & $9.14 \pm 0.42$           
	\end{tabular}
\end{table*}
\begin{table*}[ht]
	\centering
	\caption{Statistics for real-world experiments at target of $11~\mathrm{m/s}$.}
	\label{tab:real world fast}
	\begin{tabular}{c|c V{3} c|c|c}
		Samples               & Step size $\gamma_t$ & Lap time ($\mathrm{s}$) & Avg. speed ($\mathrm{m/s}$) & Max speed ($\mathrm{m/s}$) \\ \specialrule{0.15em}{0em}{0em}
		$64$                  & $1$                  & $31.05 \pm 0.67$        & $5.80 \pm 0.26$             & $10.17 \pm 0.30$           \\ \cline{2-5} 
		& $0.6$                & $30.30 \pm 0.56$        & $5.98 \pm 0.15$             & $10.30 \pm 0.05$          
	\end{tabular}
\end{table*}

\subsubsection{Real-World Experiments}
In the real-world setting~(\cref{fig:real}), the control distribution is a Gaussian with fixed covariance, and we use update~\eqref{eq:update of Gaussian with exponentiated cost} with $\lambda = 8$.
We ran two sets of experiments, each with a different target speed: one at $9~\mathrm{m/s}$ and the other at $11~\mathrm{m/s}$.\footnote{The conference version of this paper does not have the second set of experiments (target of $11~\mathrm{m/s}$).
Those experiments were conducted after the camera-ready deadline of the conference.}

For the first set of experiments, we used the following configurations: each of 1920 and 64 samples, and each of step sizes 1 (corresponding to MPPI), 0.8, and 0.6.\footnote{Due to weaker batteries used with 64 samples, results should not be compared across number of samples.}
Overall~(\cref{tab:real world slow}), there's a mild degradation in performance when decreasing the step size at 1920 samples, due to the car taking a longer path on the track~(\cref{fig:1920-1.0} vs.~\cref{fig:1920-0.6} in~\cref{app:real world}).
With 64 samples, the results seem unaffected by the step size.
This could be because, despite the noisiness of the DMD-MPC update, the setpoint controller in the car's steering servo acts as a filter, smoothing out the control signal and allowing the car to drive on a consistent path~(\cref{fig:speed_64} in~\cref{app:real world}). Videos of this experiment can be found at \href{https://youtu.be/vZST3v0_S9w}{\color{blue}\texttt{https://youtu.be/vZST3v0\_S9w}}.

For the second set of experiments, we fixed the number of samples at 64 and used step sizes of 1 (corresponding to MPPI) and 0.6.
The statistics slightly improve with a decreased step size, but qualitatively there is a larger difference between the step sizes.
With a step size of 1, the car often wobbles while driving, turns around at one point, and crashes in one of the trials~(\cref{fig:fast-64-1.0}).
On the other hand, with a step size of 0.6, the car drives much more smoothly and achieves the aggressive driving task with no issues~(\cref{fig:fast-64-0.6}).
Despite the smoothing effect of the low-level controllers in the car, the more stringent costs associated with the larger target speed cause the noisiness of the DMD-MPC update to manifest in the car's performance when using a step size of 1.
A smaller step size mitigates this noisiness.
Videos of this experiment can be found at \href{https://youtu.be/MhuqiHo2t98}{\color{blue}\texttt{https://youtu.be/MhuqiHo2t98}}.

\section{Conclusion} \label{sec:conclusion}
We presented a connection between model predictive control and online learning.
From this connection, we proposed an algorithm based on dynamic mirror descent that can work for a wide variety of settings and cost functions.
We also discussed the choice of loss function within this online learning framework and the sort of preference each loss function imposes.
From this general algorithm and assortment of loss functions, we show several well known algorithms are special cases and presented a general update for members of the exponential family.

We empirically validated our algorithm on continuous and discrete simulated problems and on a real-world aggressive driving task. In the process, we also studied the parameter choices within the framework, finding, for example, that in our framework a smaller number of rollout samples can be compensated for by varying other parameters like the step size.

We hope that the online learning and stochastic optimization viewpoints of MPC presented in this paper opens up new possibilities for using tools from these domains, such as alternative efficient sampling techniques~\citep{bellman1971differential} and accelerated optimization methods~\citep{AccMD,AccMPPI}, to derive new MPC algorithms that perform well in practice.

\section*{Acknowledgements}
This material is based upon work supported by NSF NRI award 1637758, NSF CAREER award 1750483, an NSF Graduate Research Fellowship under award No. 2015207631, and a National Defense Science \& Engineering Graduate Fellowship.
We thank Aravind Battaje, Nathan Hatch, and Hemanth Sarabu for assisting in AutoRally experiments.
~\\~\\~\\
\bibliographystyle{plainnat}
\bibliography{dmdmpc}

\clearpage
\onecolumn
\appendices

\section{Shift Operator} \label{app:shift operator}

We discuss some details in defining the shift operator.
Let $\thetav_{t-1}$ be the approximate solution to the previous problem and $\tilde{\thetav}_{t}$ denote the initial condition of $\thetav$ in solving~\eqref{eq:mpc_obj}, and consider sampling $\uvhat_t \sim \polv{\tilde\thetav_t}$ and $\uvhat_{t-1} \sim \polv{\thetav_{t-1}}$.
We set 
\[
\tilde{\thetav}_{t} = \Phi(\thetav_{t-1})
\]
by defining a \emph{shift operator} $\Phi$ that outputs a new parameter in $\Thetav$. This $\Phi$ can be chosen to satisfy desired properties, one example being that when conditioned on $\uhat_{t-1}$ and $x_t$, 
the marginal distributions of $\uhat_{t},\dots, \uhat_{t+H-2}$ are the same for both $\uvhat_t$ of  $\polv{\tilde{\thetav}_t}$ and $\uvhat_{t-1}$ of $\polv{\thetav_{t-1}}$.
A simple example of this property is shown in~\cref{fig:shift}.
Note that $\uvhat_t$ 
also involves a new control $ \uhat_{t+H-1}$ that is not in $\uvhat_{t-1}$,
so the choice of $\Phi$ is not unique but algorithm dependent; for example, we can set $ \uhat_{t+H-1}$ of $\polv{\tilde\thetav_{t}}$ to follow the same distribution as $ \uhat_{t+H-2}$ (cf.~\cref{sec:algorithms}).
Because the subproblems in~\eqref{eq:mpc_obj} of two consecutive time steps share all control variables except for the first and the last ones, the ``shifted'' parameter $\Phi(\thetav_{t-1})$ to the current problem should be almost as good as the optimized parameter $\thetav_{t-1}$ is to the previous problem. In other words, setting  $\tilde{\thetav}_{t} = \Phi(\thetav_{t-1})$ provides a warm start to~\eqref{eq:mpc_obj} and amortizes the computational complexity of solving for $\thetav_t$.

\begin{figure}[h!]
	\centering
	\includegraphics[width=0.75\textwidth]{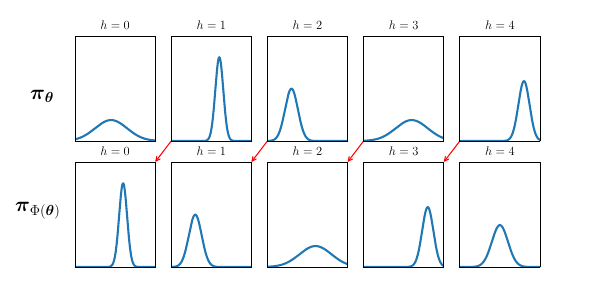}
	\vspace{-3mm}
	\caption{A simple example of the shift operator $\Phi$. Here, the control distribution $\polv{\thetav}$ consists of a sequence of $H = 5$ independent Gaussian distributions.
		The shift operator moves the parameters of the Gaussians one time step forward and replaces the parameters at $h = 4$ with some default parameters.}
	\label{fig:shift}
\end{figure}

\section{Variations of \ouralg} \label{app:variations}
The control distributions in \ouralg can be fairly general (in addition to the categorical and Gaussian distributions that we discussed) and control constraints on the problem (e.g., control limits) can be directly incorporated through proper choices of control distributions, such as the beta distribution, or through mapping the unconstrained control through some squashing function (e.g., $\tanh$ or clamp).
Though our framework cannot directly handle state constraints as in constrained optimization approaches, a constraint can be relaxed to an indicator function which activates if the constraint is violated.
The indicator function can then be added to the cost function in~\eqref{eq:sum of costs} with some weight that encodes how strictly the constraint should be enforced.

Moreover, different integration techniques, such as Gaussian quadrature~\citep{bellman1971differential}, can be adopted to replace the likelihood-ratio derivative in~\eqref{eq:likelihood-ratio} for computing the required gradient direction. 
We also note that the independence assumption on the control distribution in~\eqref{eq:independent control distribution} is not necessary in our framework; time-correlated control distributions and feedback policies are straightforward to consider in \ouralg.

\clearpage
\section{Proofs}\label{app:proofs}

\begin{proof}[Proof of \cref{pr:proximal update}]
	We prove the first statement; the second one follows directly from the duality relationship.
	The statement follows from the derivations below; we can write
	\begin{align*}
	\eta_{t+1} &= \argmin_{\eta \in \HH}\,  \inner{\gamma_t g_t}{\eta} + \bregman{A}{\eta}{\eta_t} \\
	&= \argmin_{\eta \in \HH}\, \inner{\gamma_t  g_t}{\eta} + A(\eta) - \inner{\nabla A(\eta_t)}{\eta} \\
	&= \argmin_{\eta \in \HH}\, \inner{\gamma_t  g_t - \mu_t}{\eta} + A(\eta) \\
	&= \argmax_{\eta \in \HH}\, \inner{\mu_t - \gamma_t  g_t}{\eta} - A(\eta) \\
	&= \nabla A^*(\mu_t - \gamma_t  g_t)
	\end{align*}
	where the last equality is due to the assumption that $\mu_t - \gamma_t  g_t \in \MM$. Then applying $\nabla A$ on both sides and using the relationship that $\nabla A = (\nabla A^*)^{-1}$, we have 
	$
	\mu_{t+1} = \nabla A(\eta_{t+1}) = \mu_t - \gamma_t  g_t
	$.
\end{proof} 

\section{Derivation of LQR and LEQR Losses} \label{app:lqr and leqr}
The dynamics in Equation \eqref{eq:true dynamics} are given by
\[
x_{t+1} = A x_t + B u_t + w_t
\]
for some matrices $A \in \R^{n \times n}$ and $B \in \R^{n \times m}$ and $w_t \sim \N(0, W)$, where $W \in \S_{++}^n$. For a control sequence $\uvhat_t$, noise sequence $\wvhat_t$, and initial state $x_t$, the resulting state sequence $\xvhat_t$ is found through convolution:
\[
\begin{bmatrix}
\xhat_t \\ \xhat_{t+1} \\ \xhat_{t+2} \\ \vdots \\ \xhat_{t+H}
\end{bmatrix}
= \begin{bmatrix} I \\ A \\ A^2 \\ \vdots \\ A^H \end{bmatrix} x_t
+ \begin{bmatrix} 0 & 0 & \cdots & 0 \\
B & 0 & \cdots & 0 \\
AB & B & \cdots & 0 \\
\vdots & \vdots & \ddots & \vdots \\
A^{H-1}B & A^{H-2}B & \cdots & B
\end{bmatrix} \begin{bmatrix} \uhat_t \\ \uhat_{t+1} \\ \vdots \\ \uhat_{t+H-1} \end{bmatrix}
+ \begin{bmatrix} 0 & 0 & \cdots & 0 \\
I & 0 & \cdots & 0 \\
A & I & \cdots & 0 \\
\vdots & \vdots & \ddots & \vdots \\
A^{H-1} & A^{H-2} & \cdots & I
\end{bmatrix} \begin{bmatrix} \what_t \\ \what_{t+1} \\ \vdots \\ \what_{t+H-1} \end{bmatrix},
\]
or, in matrix form:
\[
\xvhat_t = \Fv x_t + \Gv \uvhat_t + \Lv \wvhat_t,
\]
where $\Fv$, $\Gv$, and $\Lv$ are defined naturally from the convolution equation above.
Note that $\wvhat_t \sim \N(0, \Wv)$, where \mbox{$\Wv = \diag(W, W, \ldots, W, W)$}.
Thus, we also have that
\[
\xvhat_t \sim \N(\Fv x_t + \Gv \uvhat_t, \Lv \Wv \Lv\T).
\]

We define the instantaneous and terminal costs as
\begin{align*}
c(x, u) &= \onehalf x\T Q x + \onehalf u\T R u \\
c_{\mathrm{end}}(x) &= \onehalf x\T Q_{\mathrm{end}} x,
\end{align*}
where $Q, Q_{\mathrm{end}} \in \S_+^n$ and $R \in \S_{++}^m$. Thus, the statistic $C(\xvhat_t, \uvhat_t)$ is
\[
C(\xvhat_t, \uvhat_t) = \onehalf \xvhat_t\T \Qv \xvhat_t + \onehalf \uvhat_t\T \Rv \uvhat_t,
\]
where $\Qv = \diag(Q, Q, \ldots, Q, Q_{\mathrm{end}})$ and $\Rv = \diag(R, R, \ldots, R, R)$.

Our control distribution is a Dirac delta distribution located at the given parameter: $\piv_{\thetav}(\uvhat_t) = \delta(\uvhat_t - \thetav)$.
\subsection{LQR}
The loss is defined as $\ell_t(\thetav) = \Esub{\polv{\thetav}, \hat\xv_t}{\onehalf \xvhat_t\T \Qv \xvhat_t + \onehalf \uvhat_t\T \Rv \uvhat_t}$. 
Expanding this out gives:
\begin{align*}
\ell_t(\thetav) &= \Esub{\polv{\thetav}, \hat\xv_t}{\onehalf \xvhat_t\T \Qv \xvhat_t + \onehalf \uvhat_t\T \Rv \uvhat_t} \\
&= \onehalf \thetav\T \left( \Gv\T \Qv \Gv + \Rv \right) \thetav + x_t\T \Fv\T \Qv \Gv \thetav + \onehalf x_t\T \Fv\T \Qv \Fv x_t + \onehalf \E{\wvhat_t\T \Lv\T \Qv \Lv \wvhat_t} \\
&= \onehalf \thetav\T \left( \Gv\T \Qv \Gv + \Rv \right) \thetav + x_t\T \Fv\T \Qv \Gv \thetav + \onehalf x_t\T \Fv\T \Qv \Fv x_t + \onehalf \tr{\Qv \Lv \Wv \Lv\T}.
\end{align*}
We see this is a quadratic problem in $\thetav$ by defining
\begin{align*}
\Rv_t &= \Gv\T \Qv \Gv + \Rv \\
\rv_t &= \Gv\T \Qv \Fv x_t.
\end{align*}
\subsection{LEQR}
The loss is defined as
\[
\ell_t(\thetav) = -\log \Esub{\polv{\thetav}, \hat\xv_t}{\exp\left( -\frac{1}{\lambda} \left( \onehalf \xvhat_t\T \Qv \xvhat_t + \onehalf \uvhat_t\T \Rv \uvhat_t \right) \right)}
\]
for some parameter $\lambda > 0$.
For compactness, we define $\Qv' = \frac{1}{\lambda} \Qv$ and $\Rv' = \frac{1}{\lambda} \Rv$ so that the exponent contains \mbox{$-\onehalf \xvhat_t\T \Qv' \xvhat - \onehalf \uvhat_t\T \Rv' \uvhat_t$}.
In expanding the loss, we use the following fact:
\begin{fact}\label{fact:expectation of exponential}
For $x \sim \N(\mu, \Sigma)$, where $\Sigma \in \S_{++}^n$, and constants $A \in \S_+^n$ and $b \in \R^n$:
\[
\Esub{x}{\exp\left( -\onehalf x\T A x - b\T x \right)} = \frac{1}{\sqrt{|A\Sigma + I|}} \exp\left( -\onehalf \left( \mu\T \Sigma\inv \mu - (\Sigma\inv \mu - b)\T (A + \Sigma\inv)\inv (\Sigma\inv \mu - b) \right) \right).
\]
\end{fact}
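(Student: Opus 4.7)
The plan is a direct Gaussian-integral calculation via completing the square. First I would write the expectation explicitly,
\[
\Esub{x}{\exp\bigl(-\onehalf x\T A x - b\T x\bigr)} = \frac{1}{(2\pi)^{n/2} |\Sigma|^{1/2}} \int \exp\bigl(-\onehalf (x-\mu)\T \Sigma\inv (x-\mu) - \onehalf x\T A x - b\T x\bigr) \d{x},
\]
expand the quadratic form from the Gaussian density, and collect terms in $x$. The $x$-quadratic part becomes $-\onehalf x\T (A + \Sigma\inv) x$, the $x$-linear part becomes $(\Sigma\inv \mu - b)\T x$, and a constant $-\onehalf \mu\T \Sigma\inv \mu$ remains.

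Next, setting $P \triangleq A + \Sigma\inv$ (which is positive definite because $\Sigma\inv \succ 0$ and $A \succeq 0$, so $P\inv$ exists) and $q \triangleq \Sigma\inv \mu - b$, I would complete the square to rewrite the exponent as
\[
-\onehalf (x - P\inv q)\T P (x - P\inv q) + \onehalf q\T P\inv q - \onehalf \mu\T \Sigma\inv \mu.
\]
The remaining integral is the normalizer of a Gaussian with covariance $P\inv$, evaluating to $(2\pi)^{n/2} |P|^{-1/2}$. Combining with the $(2\pi)^{-n/2} |\Sigma|^{-1/2}$ factor out front yields $|P|^{-1/2} |\Sigma|^{-1/2} = |P \Sigma|^{-1/2} = |A\Sigma + I|^{-1/2}$, using the identity $|(A + \Sigma\inv)\Sigma| = |A\Sigma + I|$. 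Substituting $P$ and $q$ back into the residual constant term recovers exactly the exponent stated in the fact.

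This is a routine calculation, so the only real work is careful bookkeeping: keeping signs aligned through the completion of the square, verifying that $A + \Sigma\inv$ is invertible (which follows from $\Sigma\inv \succ 0$ even when $A$ is merely positive semidefinite), and applying the determinant identity $|A + \Sigma\inv| |\Sigma| = |A\Sigma + I|$ to land on the stated form of the prefactor. There is no conceptual obstacle beyond this algebraic bookkeeping.
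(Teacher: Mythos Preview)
Your proposal is correct and follows essentially the same approach as the paper's proof: write the expectation as a Gaussian integral, expand and collect the quadratic, linear, and constant terms in $x$, complete the square with respect to the precision matrix $A+\Sigma\inv$, evaluate the resulting Gaussian integral as a normalizer, and simplify the determinant via $|A+\Sigma\inv|\,|\Sigma|=|A\Sigma+I|$. The only cosmetic difference is notation (your $P,q$ versus the paper's $\tilde\Sigma\inv,\tilde\mu$).
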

\begin{proof}
We expand the expectation and complete the square:
\begin{align*}
\Esub{x}{\exp\left( -\onehalf x\T A x - b\T x \right)} &= \frac{1}{\sqrt{(2\pi)^n |\Sigma|}} \int \exp\left( -\onehalf (x - \mu)\T \Sigma\inv (x - \mu) \right) \exp\left(-\onehalf x\T Ax - b\T x \right) \d{x} \\
&= \frac{1}{\sqrt{(2\pi)^n |\Sigma|}} \int \exp\left( -\onehalf \left[ x\T(A + \Sigma\inv)x + 2(b - \Sigma\inv \mu)\T x + \mu\T \Sigma\inv \mu \right] \right) \d{x} \\
&= \frac{1}{\sqrt{(2\pi)^n |\Sigma|}} \exp(c) \int \exp\left( -\onehalf (x - \tilde{\mu})\T \tilde{\Sigma}\inv (x - \tilde{\mu}) \right) \d{x} \\
&= \frac{\sqrt{(2\pi)^n |\tilde{\Sigma}|}}{\sqrt{(2\pi)^n |\Sigma|}} \exp(c) \\
&= \frac{1}{\sqrt{|A + \Sigma\inv| |\Sigma|}} \exp(c) \\
&= \frac{1}{\sqrt{|A\Sigma + I|}} \exp\left( -\onehalf \left( \mu\T \Sigma\inv \mu - (\Sigma\inv \mu - b)\T (A + \Sigma\inv)\inv (\Sigma\inv \mu - b) \right) \right),
\end{align*}
where $\tilde{\mu} = (A + \Sigma\inv)\inv (\Sigma\inv \mu - b)$, $\tilde{\Sigma} = (A + \Sigma\inv)\inv$, and $c = -\onehalf \left(\mu\T \Sigma\inv \mu - (\Sigma\inv \mu - b)\T (A + \Sigma\inv)\inv (\Sigma\inv \mu - b) \right)$.
\end{proof}

We now expand the loss:
\begin{align*}
\ell_t(\thetav) &= -\log \Esub{\polv{\thetav}, \hat\xv_t}{\exp\left( -\onehalf \xvhat_t\T \Qv' \xvhat_t - \onehalf \uvhat_t\T \Rv' \uvhat_t \right)} \\
&= -\log \Esub{\hat\xv_t}{\exp\left( -\onehalf \xvhat_t\T \Qv' \xvhat_t - \onehalf \thetav\T \Rv' \thetav \right)} \\
&= -\log \Bigg \{ \frac{1}{\sqrt{|\Qv' \Lv\Wv\Lv\T + I|}} \exp\bigg(-\onehalf \big[ (\Fv x_t + \Gv\thetav)\T (\Lv\Wv\Lv\T)\inv (\Fv x_t + \Gv\thetav) \\
& \qquad~\qquad~\qquad~\qquad~\qquad~\qquad\quad\qquad~\; -(\Fv x_t + \Gv \thetav)\T (\Lv\Wv\Lv\T\Qv'\Lv\Wv\Lv\T + \Lv\Wv\Lv\T)\inv (\Fv x_t + \Gv \thetav) \\
& \qquad~\qquad~\qquad~\qquad~\qquad~\qquad\quad\qquad~\;+ \thetav\T \Rv' \thetav \big] \bigg) \Bigg\} \\
&= \onehalf \left[ (\Fv x_t + \Gv \thetav)\T [ (\Lv\Wv\Lv\T)\inv + (\Lv\Wv\Lv\T\Qv'\Lv\Wv\Lv\T + \Lv\Wv\Lv\T)\inv ] (\Fv x_t + \Gv \thetav) + \thetav\T \Rv' \thetav \right] + \onehalf \log |\Qv' \Lv\Wv\Lv\T + I|. \\ 
\end{align*}
We see this is a quadratic problem in $\thetav$ by defining
\begin{align*}
\Rv_t &= \Gv\T \left[ (\Lv\Wv\Lv\T)\inv + \left( \frac{1}{\lambda}\Lv\Wv\Lv\T\Qv \Lv\Wv\Lv\T + \Lv\Wv\Lv\T \right)\inv \right] \Gv + \frac{1}{\lambda} \Rv \\
\rv_t &= \Gv\T \left[ (\Lv\Wv\Lv\T)\inv + \left( \frac{1}{\lambda}\Lv\Wv\Lv\T\Qv \Lv\Wv\Lv\T + \Lv\Wv\Lv\T \right)\inv \right] \Fv x_t.
\end{align*}

\section{Experimental Setup} \label{app:exp details}

\subsection{Cartpole}
The state is $x_t = (p_t, \varphi_t, v_t, \dot{\varphi}_t)$, where $p_t$ is the cart position, $\varphi_t$ is the pole's angle, $v_t$ and $\dot{\varphi}_t$ are the corresponding velocities, and the control $u_t$ is the force applied to the cart.
We define the instantaneous cost and terminal cost of the MPC problem as
\begin{align*}
c(x_t, u_t) &= 10p_t^2 + 500(\varphi_t - \pi)^2 + v_t^2 + 15 \dot{\varphi}_t^2 + 1000 \cdot \indicator{|\varphi_t - \pi| \ge \Delta} \\
c_{\mathrm{end}}(x_t) &= c(x_t, 0)
\end{align*}
where $\Delta$ is some threshold. For our experiments, we set $\Delta = 12^\circ = 0.21$ radians.

In our experiments, the pole is massless except for some weight at the end of the pole.
The mass of the cart and pole weight are $0.711~\mathrm{kg}$ and $0.209~\mathrm{kg}$, respectively. The true length of the pole is $0.326~\mathrm{m}$, whereas the length used in the model is $0.346~\mathrm{m}$.
Each time step is modeled using an Euler discretization of $0.02$ seconds.
Each episode of the problem lasts 500 time steps (i.e, 10 seconds) and has episode cost equal to the sum of encountered instantaneous costs.
Both the true system and the model apply Gaussian additive noise to the commanded control with zero mean and a standard deviation of $5$ newtons.
For the continuous system, the commanded control is clamped to $\pm 25$ newtons.
For the discrete system, the controller can either command $10$ newtons to the left, $10$ newtons to the right, or $0$ newtons.

Both the discrete and continuous controller use a planning horizon of 50 time steps (i.e., 1 second).
For the continuous controller, we keep the standard deviation of the Gaussian distribution fixed at $2$ newtons for each time step in the planning horizon. When applying a control $u_t$ on the real cartpole, we choose the mode of $\pi_{\theta_t}$ rather than sample from the distribution.

All reported results were gathered using ten episodes per parameter setting.

\subsection{AutoRally} \label{app:autorally details}
The state of the vehicle is $x_t = (p_{x,t}, p_{y,t}, \varphi_t, r_t, v_{x,t}, v_{y,t}, \dot{\varphi}_t)$, where $(p_{x,t}, p_{y,t})$ is the position of the car in the global frame, $\varphi_t$ and $r_t$ are the yaw and roll angles, $v_{x,t}$ and $v_{y,t}$ are the longitudinal and lateral velocities in the car frame, and $\dot{\varphi}_t$ is the yaw rate.
The control $u_t$ we apply is the throttle and steering angle.
For some weights $w_1, \ldots, w_4$, the cost function is
\begin{align*}
c(x_t, u_t) &= w_1 |s_t - s_{\mathrm{tgt}}|^k + w_2 M(p_{x,t}, p_{y,t}) + w_3 S_c(x_t) \\
c_{\mathrm{end}}(\xv_t) &= w_4 C(\xv_t).
\end{align*}
Here, $s_t$ and $s_\mathrm{tgt}$ are the current and target speed of the car, respectively.
Note the speed is calculated as \mbox{$s_t = \sqrt{v_{x,t}^2 + v_{y,t}^2}$}.
$M(p_{x,t}, p_{y,t})$ is the positional cost of the car (low cost in center of track, high cost at edge of track), $S_c(x_t)$ is an indicator variable which activates if the slip angle\footnote{The slip angle is defined as $-\arctan \frac{v_{y,t}}{|v_{x,t}|}$, which gives the angle between the direction the car is pointing and the direction in which it is actually traveling.} exceeds a certain threshold, and $C(\xv_t)$ is an indicator function which activates if the car leaves the track at all in the trajectory.
Note that the terminal cost depends on the trajectory instead of the terminal state.
Each time step represents $0.02$ seconds for every experiment except the real-world experiment with a target of $11~\mathrm{m/s}$ where each time step represents $0.025$ seconds.
The length of the planning trajectory is 100 time steps (i.e., either 2 seconds or 2.5 seconds depending on the length of the time step).
The values for the cost function parameters are given in Table~\ref{tab:autorally params}.

The control space for each of the throttle and steering angle is normalized to the range $[-1, 1]$.
For our experiments, we clamp the throttle to $[-1, 0.65]$.
In simulated experiments, the standard deviations of the throttle and steering angle distributions were $0.3$ and $0.275$, respectively.
In the real world experiments, they were both set to $0.3$. 
When applying a control $u_t$ on the car, we chose the mean of $\pi_{\theta_t}$ rather than sampling from the distribution.

\begin{figure}[H]
	\centering
	\includegraphics[width=0.5\textwidth]{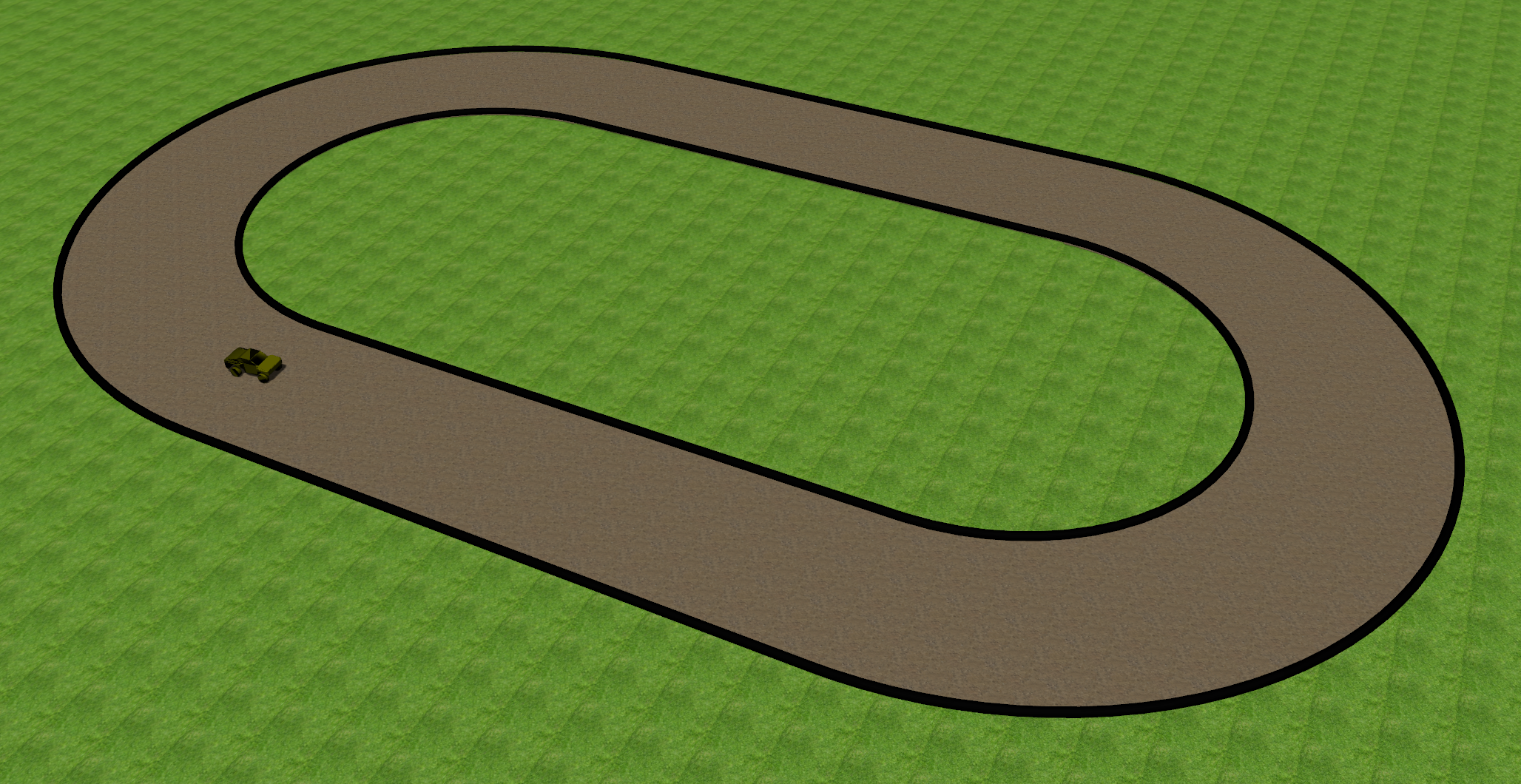}
	\caption{Simulated AutoRally task.}
	\label{fig:gazebo}
\end{figure}

In simulation, the environment~(\cref{fig:gazebo}) is an elliptical track approximately 3 meters wide and 30 meters across at its furthest point.
The real-world dirt track is about 5 meters wide and and has a track length of 170 meters.
All reported results for simulated experiments were gathered using 30 consecutive laps in the counter-clockwise direction for each parameter setting.
For real-world experiments, results were gathered using ten laps for each parameter setting when the target speed is $9~\mathrm{m/s}$ and five laps for $11~\mathrm{m/s}$.

\begin{table}[h!]
\caption{Cost function settings for AutoRally experiments.}
\label{tab:autorally params}
\centering
\begin{tabular}{c|c|c|c|c|c|c|c}
                 & $s_\mathrm{tgt}$ ($\mathrm{m/s}$) & $k$ & $w_1$ & $w_2$ & $w_3$ & $w_4$   & Slip angle threshold ($\mathrm{rad}$) \\ \hline
Gazebo simulator & $11$                              & $1$ & $30$  & $250$ & $10$  & $10000$ & $0.275$                               \\ \hline
Real world       & $9$ or $11$                       & $2$ & $4.25$& $200$ & $100$ & $10000$ & $0.9$
\end{tabular}
\end{table}

\section{Extra Experimental Results} \label{app:exp results}
\subsection{Simulated Experiments}
Adding onto the results from~\cref{subsec:simulated experiments}, we qualitatively evaluate two particular extremes: \mbox{few vs. many samples (64 vs. 3840)} and small vs. large step size (0.5 vs. 1) by looking at the path and speed of the car during the episode~(\cref{fig:speed}).
At small step sizes~(\cref{fig:speed_64_0.5,fig:speed_3840_0.5}), the path and speed profiles are rather similar, while with few samples and a large step size~(\cref{fig:speed_64_1.0}), the car drives much more slowly and erratically, sometimes even stopping.
In the ideal scenario with many samples and a large step size, the car can achieve consistently high speed while driving smoothly~(\cref{fig:speed_3840_1.0}).

We also experimented with instead optimizing the expected cost~\eqref{eq:expected cost (MPC obj)} and found performance was dramatically worse~(\cref{fig:speed identity}), even when using 3840 samples per gradient.
At best, the car would drive in the center of the track at speeds below $4~\mathrm{m/s}$~(\cref{fig:speed identity 0.075}), and at worst, the car would either slowly drive along the track walls~(\cref{fig:speed identity 0.025}) or the controller would eventually produce $\mathrm{NaN}$ controls that would prematurely end the experiment~(\cref{fig:speed identity 0.1}).
This poor performance is likely due to most samples in the estimate of~\eqref{eq:grad expected cost (MPC obj)} having very high cost (e.g., due to leaving the track) and contributing significantly to the gradient estimate.
On the other hand, when estimating~\eqref{eq:grad exponentiated utility (MPC obj))}, as in the experiments in~\cref{subsec:simulated experiments}, these high cost trajectories are assigned very low weights so that only low cost trajectories contribute to the gradient estimate.

\begin{figure}[p]
	\centering
	\begin{subfigure}[b]{0.25\textwidth}
		\centering
		\includegraphics[width=\textwidth]{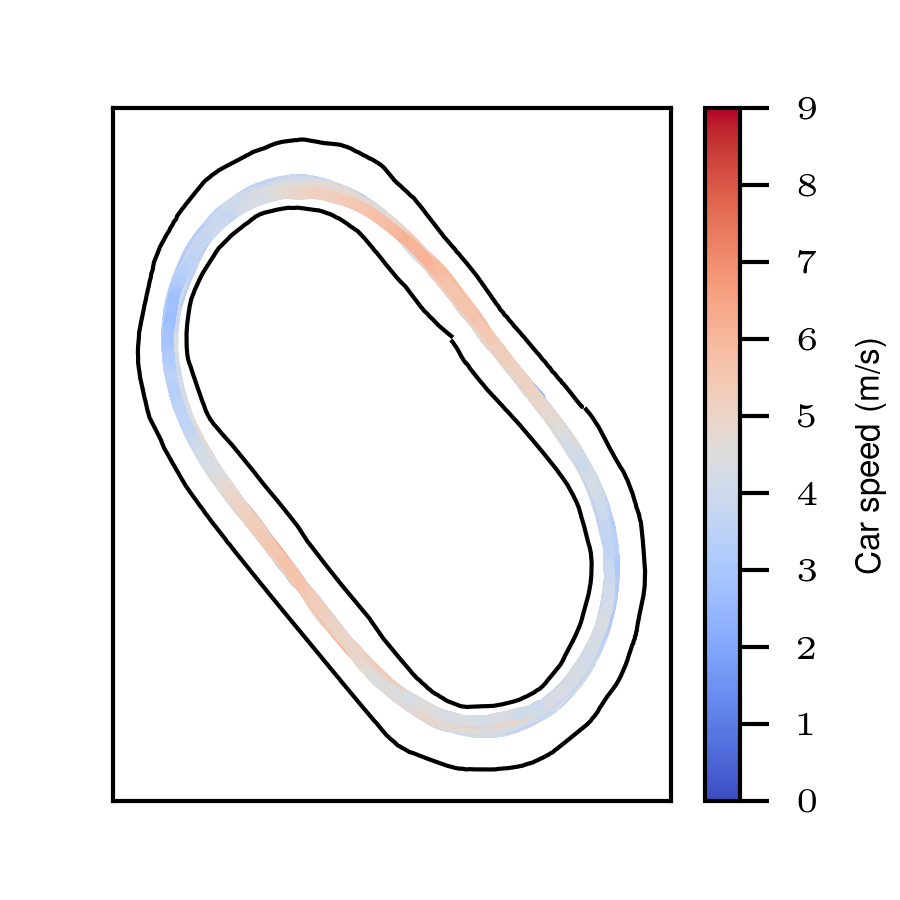}
		\caption{64 samples, $\gamma_t = 0.5$}
		\label{fig:speed_64_0.5}
	\end{subfigure}
	\begin{subfigure}[b]{0.25\textwidth}
		\centering
		\includegraphics[width=\textwidth]{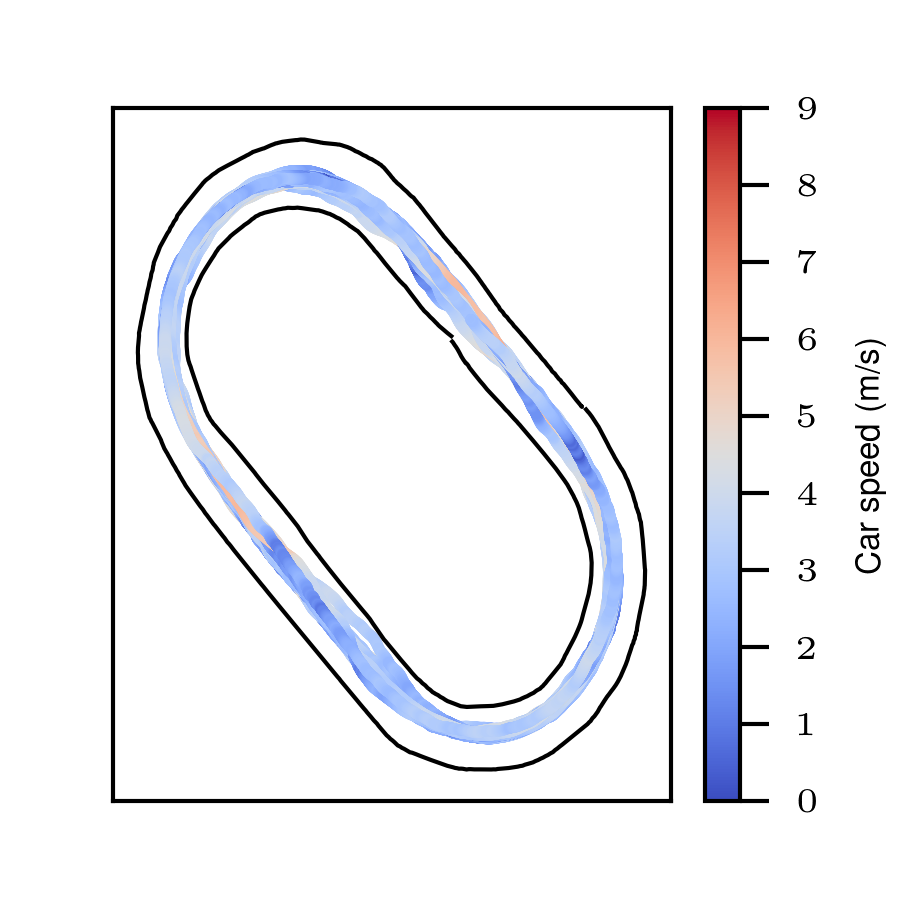}
		\caption{64 samples, $\gamma_t = 1$}
		\label{fig:speed_64_1.0}
	\end{subfigure}\\
	\begin{subfigure}[b]{0.25\textwidth}
		\centering
		\includegraphics[width=\textwidth]{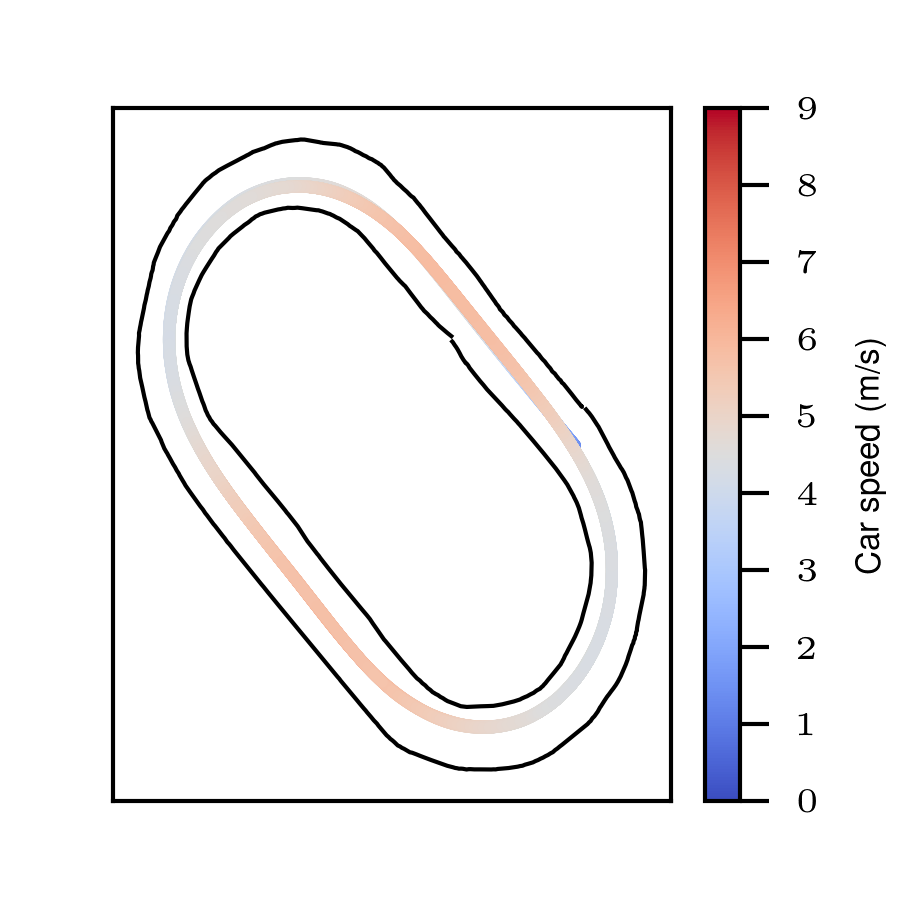}
		\caption{3840 samples, $\gamma_t = 0.5$}
		\label{fig:speed_3840_0.5}
	\end{subfigure}
	\begin{subfigure}[b]{0.25\textwidth}
		\centering
		\includegraphics[width=\textwidth]{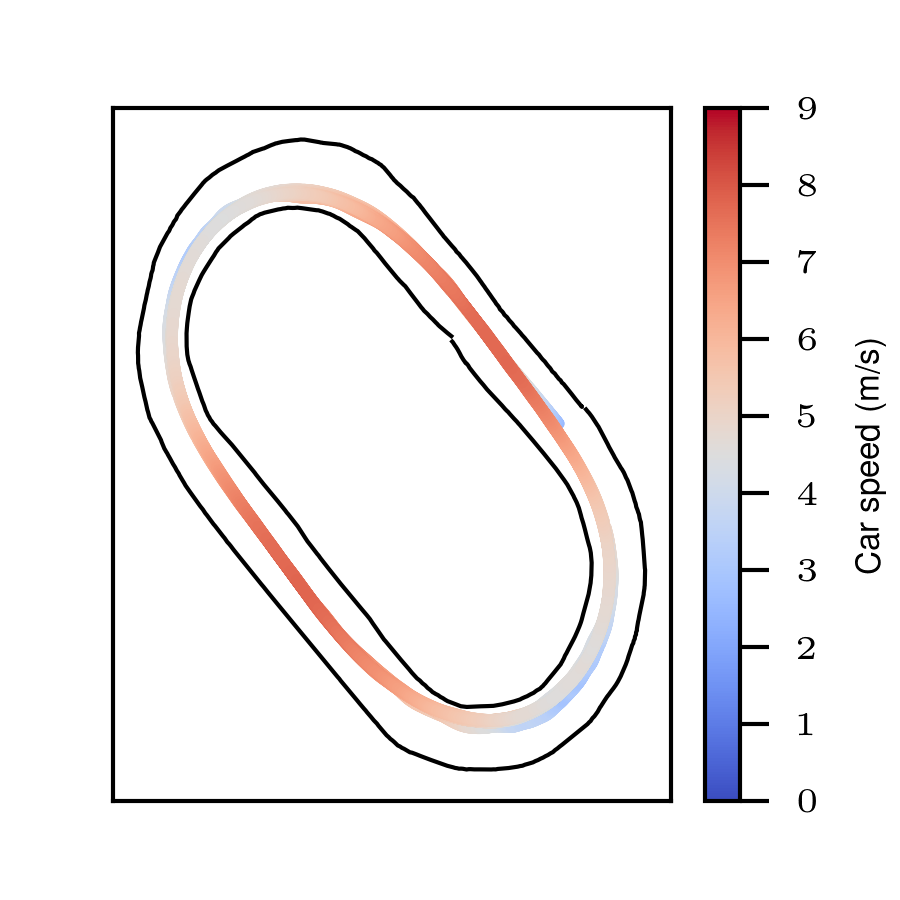}
		\caption{3840 samples, $\gamma_t = 1$}
		\label{fig:speed_3840_1.0}
	\end{subfigure}
	\caption{Car speeds when optimizing the exponential utility~\eqref{eq:exponentiated utility (MPC obj)}. The speeds and trajectories are very similar at step size $0.5$, irrespective of the number of samples. At step size $1$, though, 64 samples result in capricious maneuvers and low speeds, whereas 3840 samples result in smooth driving at high speeds.}
	\label{fig:speed}
\end{figure}
\begin{figure}[p]
	\centering
	\begin{subfigure}[b]{0.25\textwidth}
		\centering
		\includegraphics[width=\textwidth]{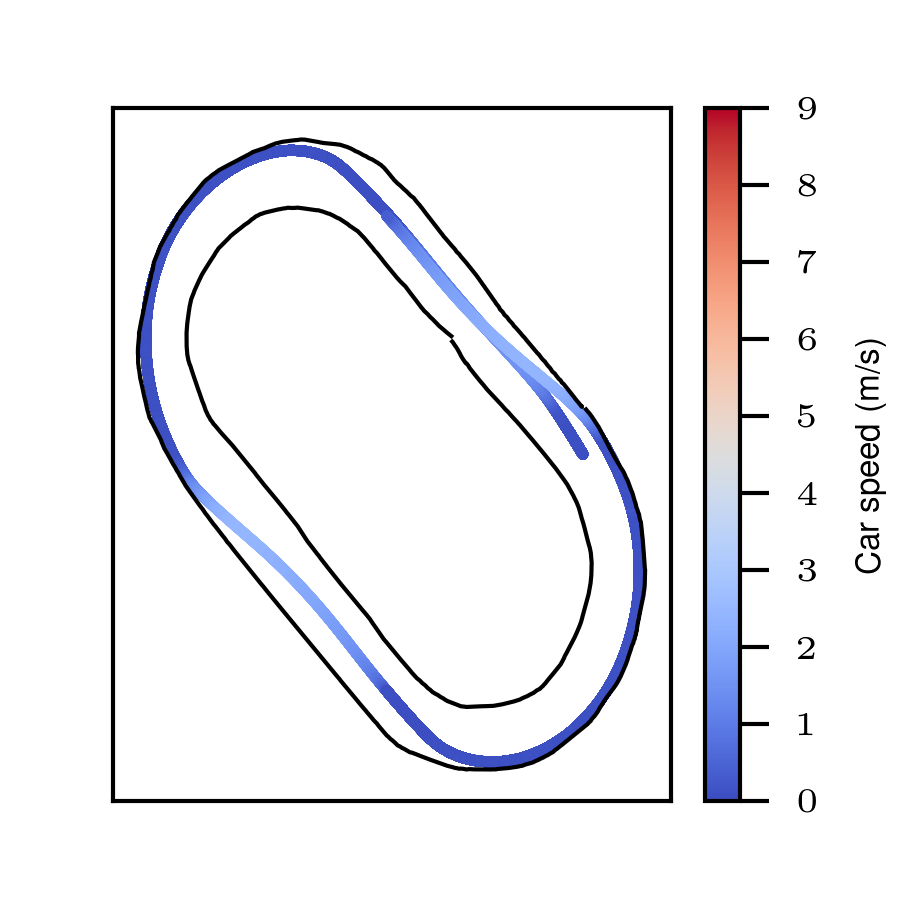}
		\caption{$\gamma_t = 0.025$}
		\label{fig:speed identity 0.025}
	\end{subfigure}
	\begin{subfigure}[b]{0.25\textwidth}
		\centering
		\includegraphics[width=\textwidth]{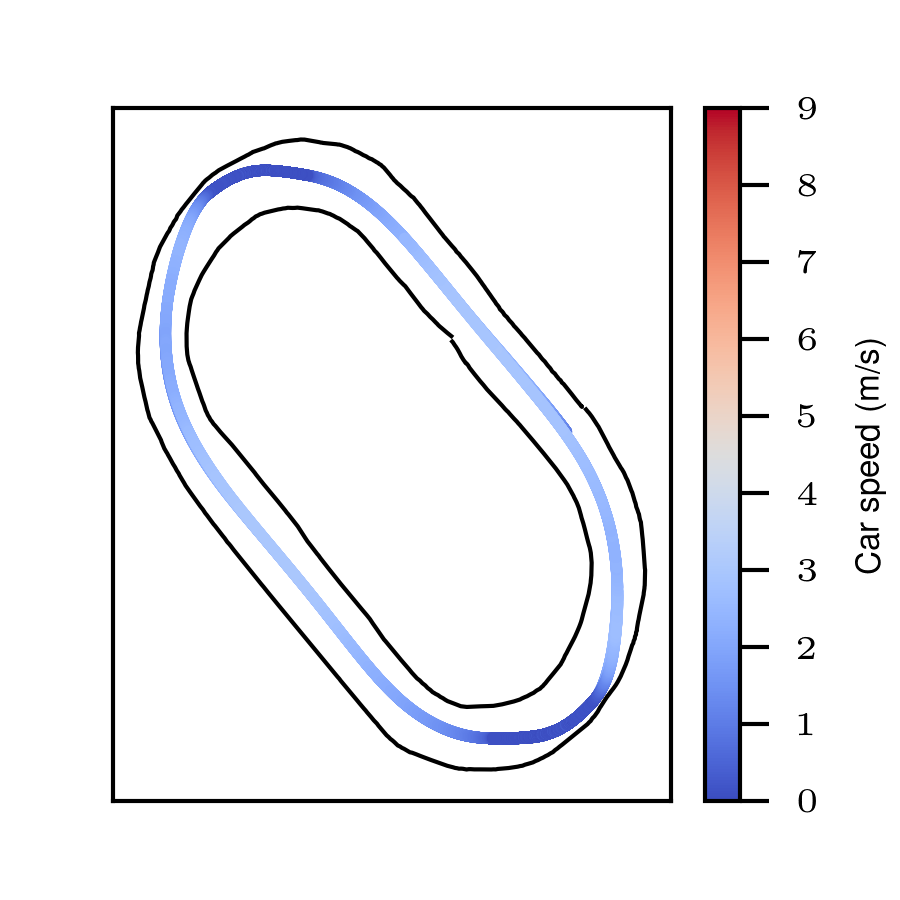}
		\caption{$\gamma_t = 0.05$}
		\label{fig:speed identity 0.05}
	\end{subfigure}\\
	\begin{subfigure}[b]{0.25\textwidth}
		\centering
		\includegraphics[width=\textwidth]{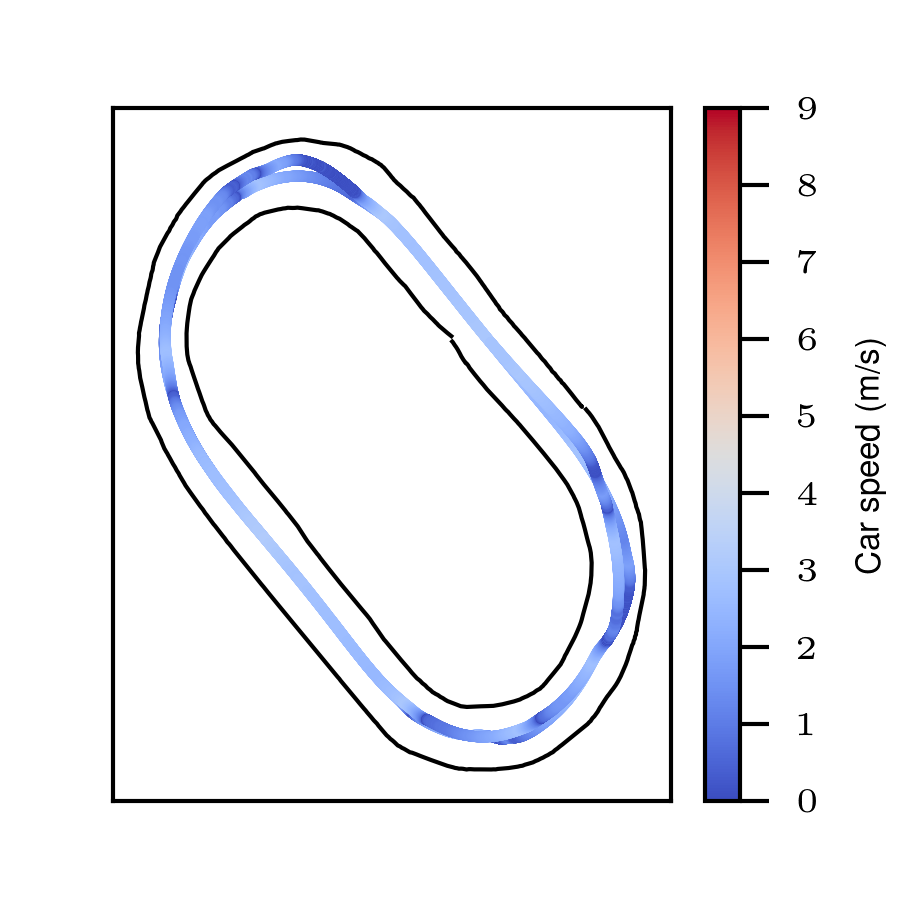}
		\caption{$\gamma_t = 0.075$}
		\label{fig:speed identity 0.075}
	\end{subfigure}
	\begin{subfigure}[b]{0.25\textwidth}
		\centering
		\includegraphics[width=\textwidth]{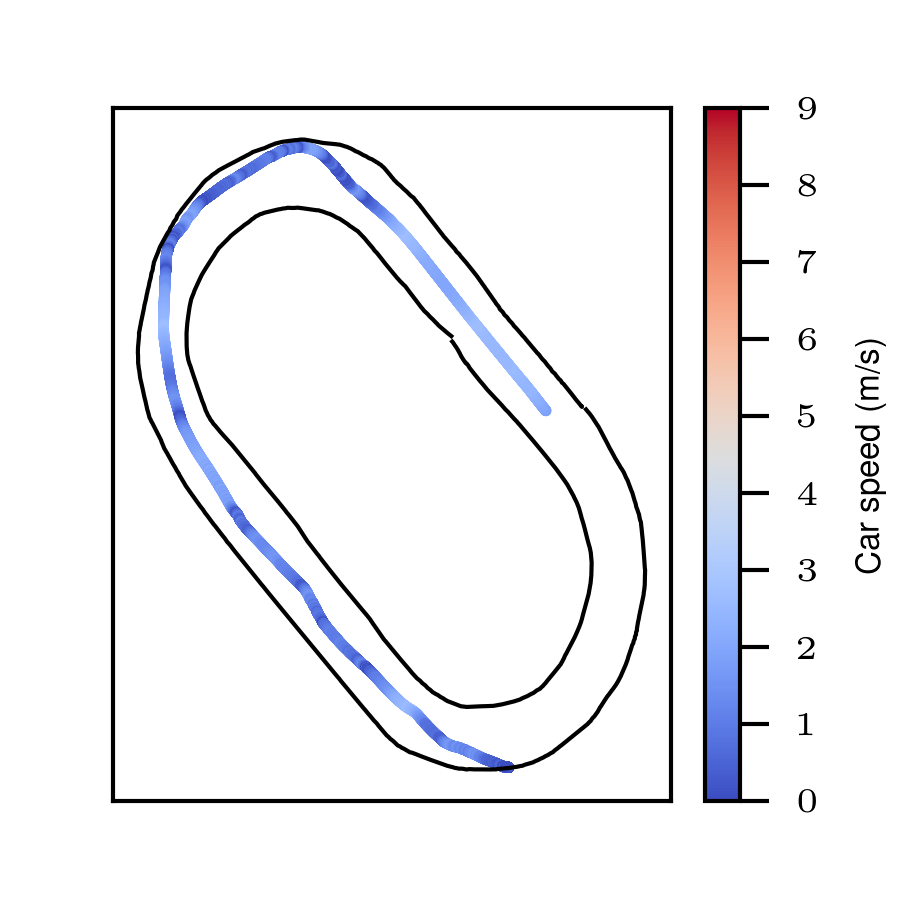}
		\caption{$\gamma_t = 0.1$}
		\label{fig:speed identity 0.1}
	\end{subfigure}
	\caption{Car speeds when optimizing the expected cost~\eqref{eq:expected cost (MPC obj)}. All tested step sizes result in low speeds. At too low or too high of a step size, the car will drive along the wall or crash into it.}
	\label{fig:speed identity}
\end{figure}

\clearpage
\subsection{Figures for Real-World Experiments} \label{app:real world}
\begin{figure}[h!]
	\centering
	\begin{subfigure}[b]{0.3\textwidth}
		\centering
		\includegraphics[width=\textwidth]{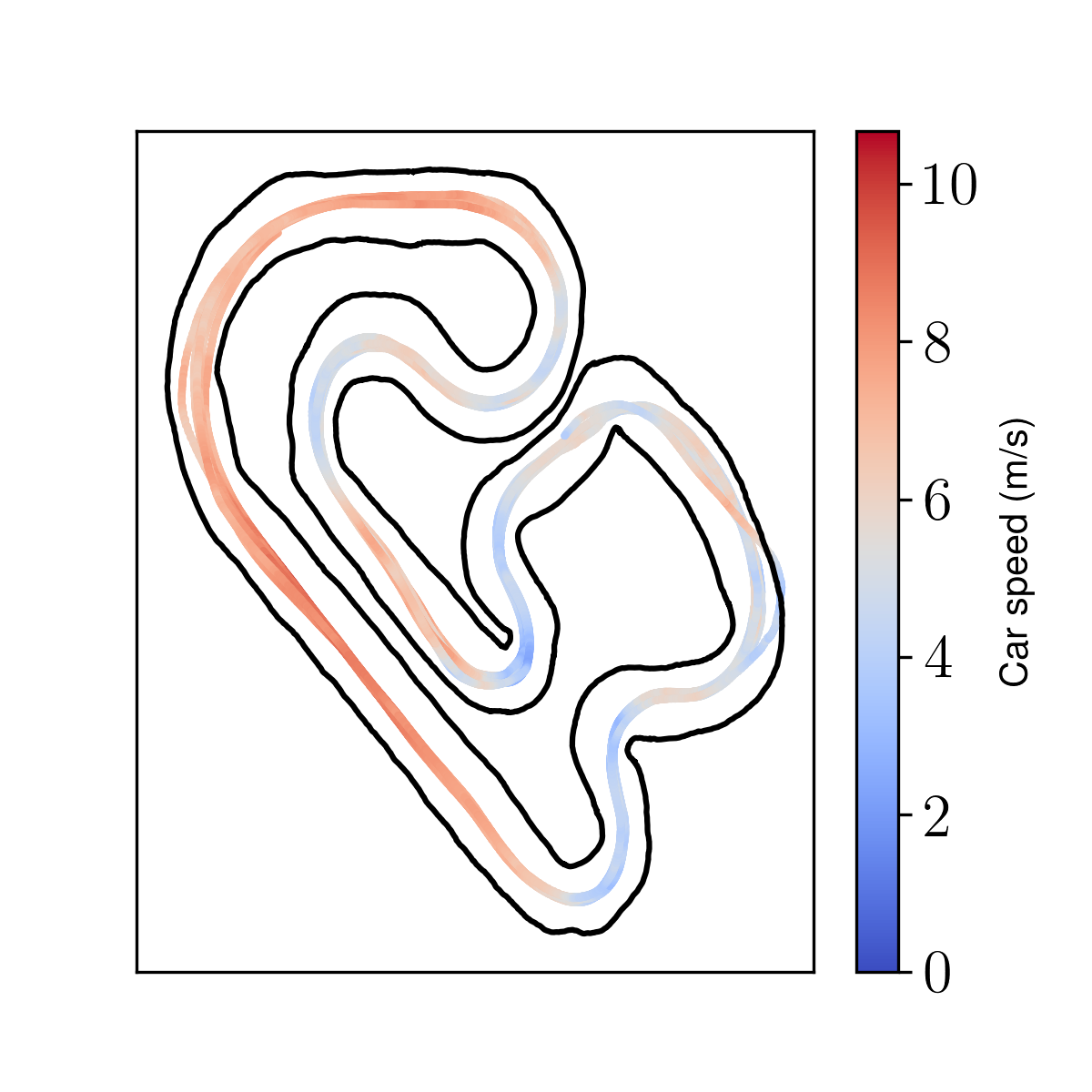}
		\caption{$\gamma_t = 1$}
		\label{fig:1920-1.0}
	\end{subfigure}
	\begin{subfigure}[b]{0.3\textwidth}
		\centering
		\includegraphics[width=\textwidth]{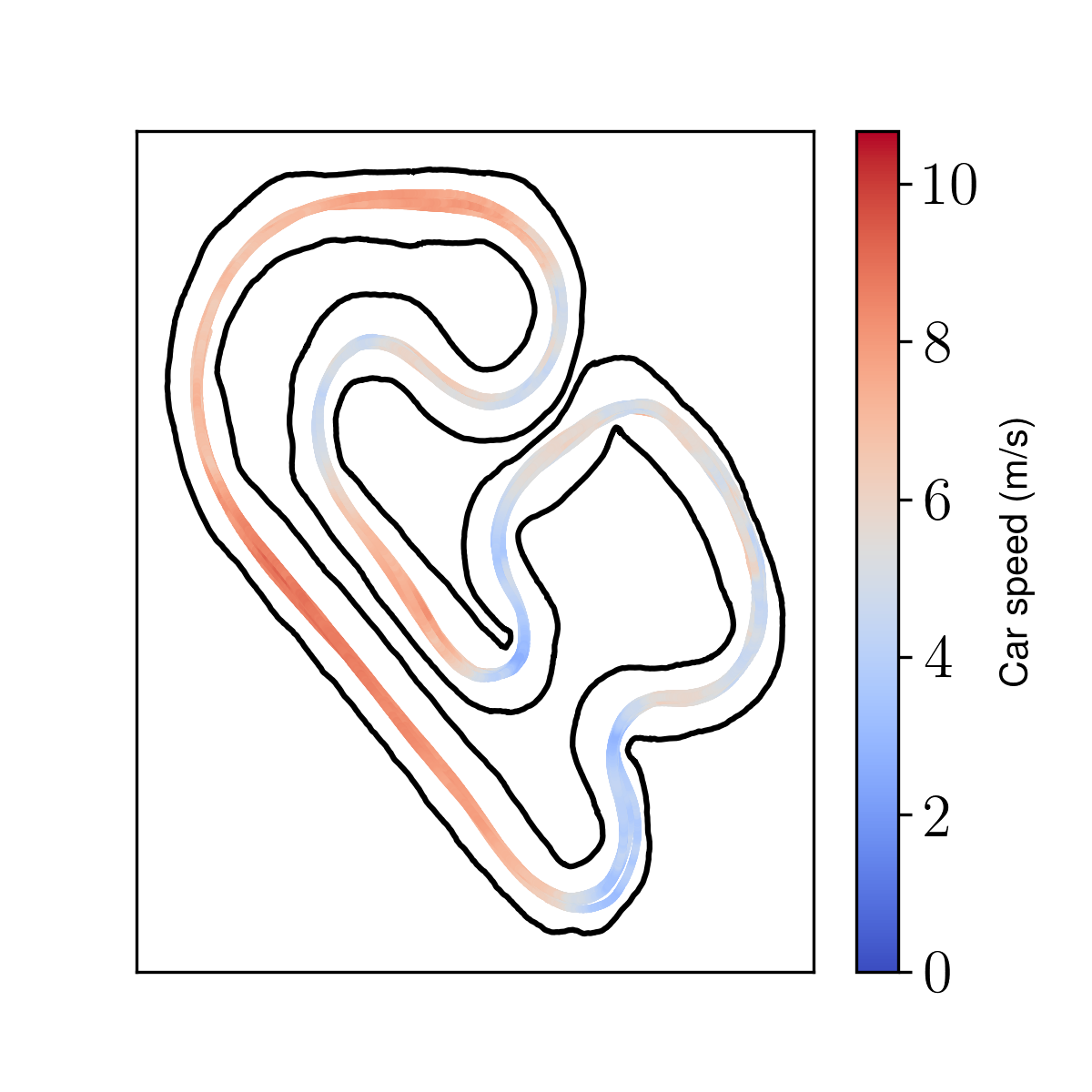}
		\caption{$\gamma_t = 0.8$}
		\label{fig:1920-0.8}
	\end{subfigure}
	\begin{subfigure}[b]{0.3\textwidth}
		\centering
		\includegraphics[width=\textwidth]{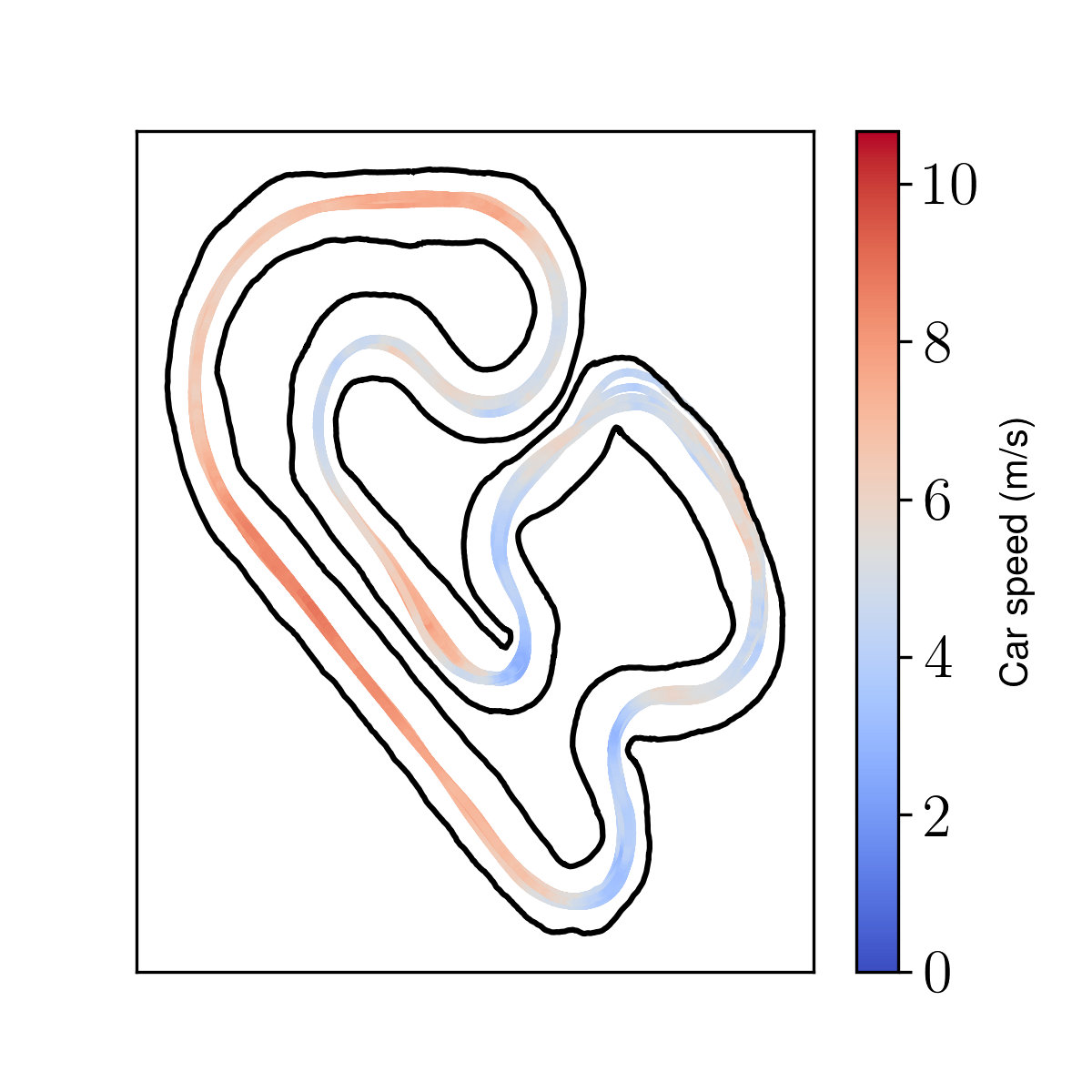}
		\caption{$\gamma_t = 0.6$}
		\label{fig:1920-0.6}
	\end{subfigure}
	\caption{Car speeds with 1920 samples per gradient estimate and target of $9~\mathrm{m/s}$.}
	\label{fig:speed_1920}
\end{figure}
\begin{figure}[h!]
	\centering
	\begin{subfigure}[b]{0.3\textwidth}
		\centering
		\includegraphics[width=\textwidth]{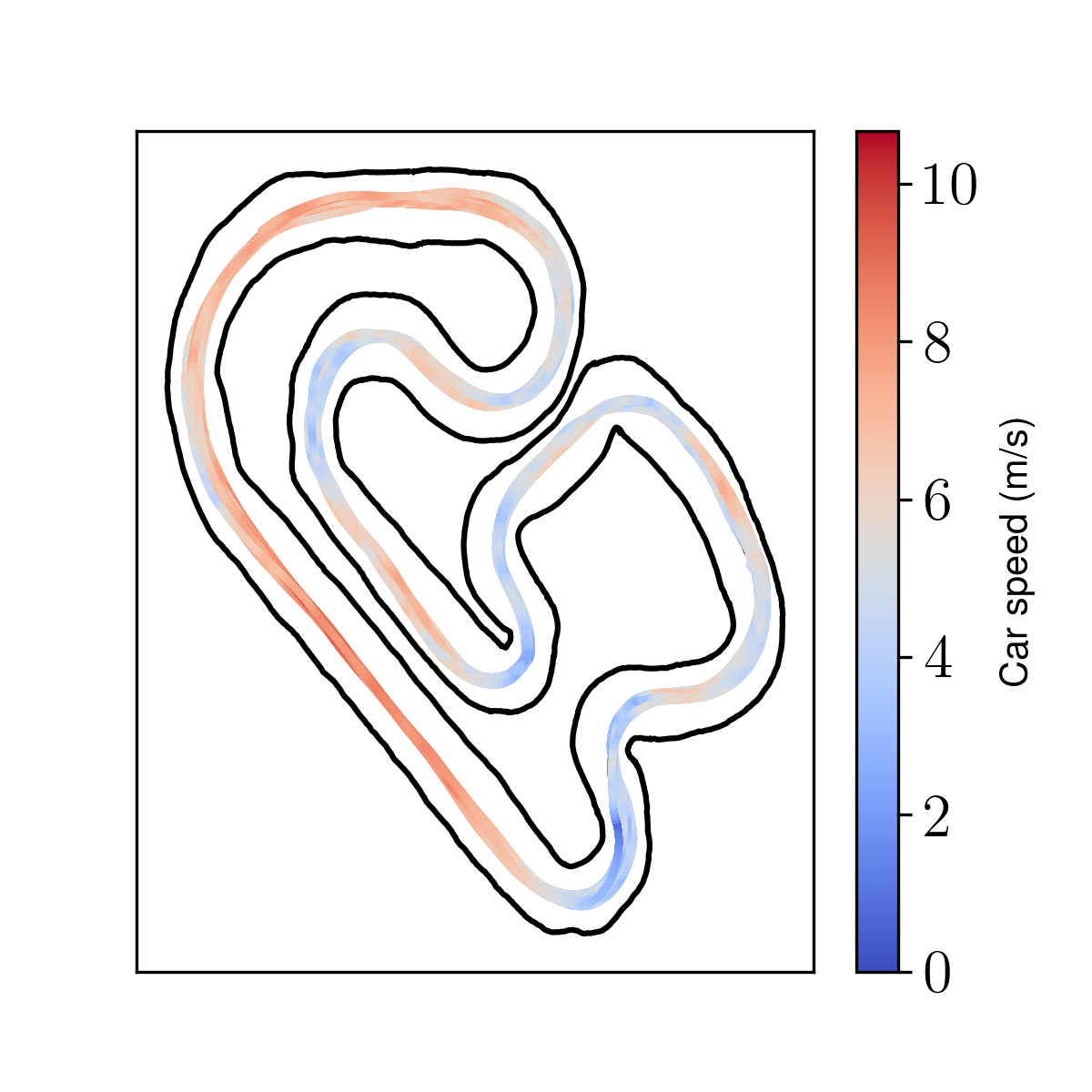}
		\caption{$\gamma_t = 1$}
		\label{fig:64-1.0}
	\end{subfigure}
	\begin{subfigure}[b]{0.3\textwidth}
		\centering
		\includegraphics[width=\textwidth]{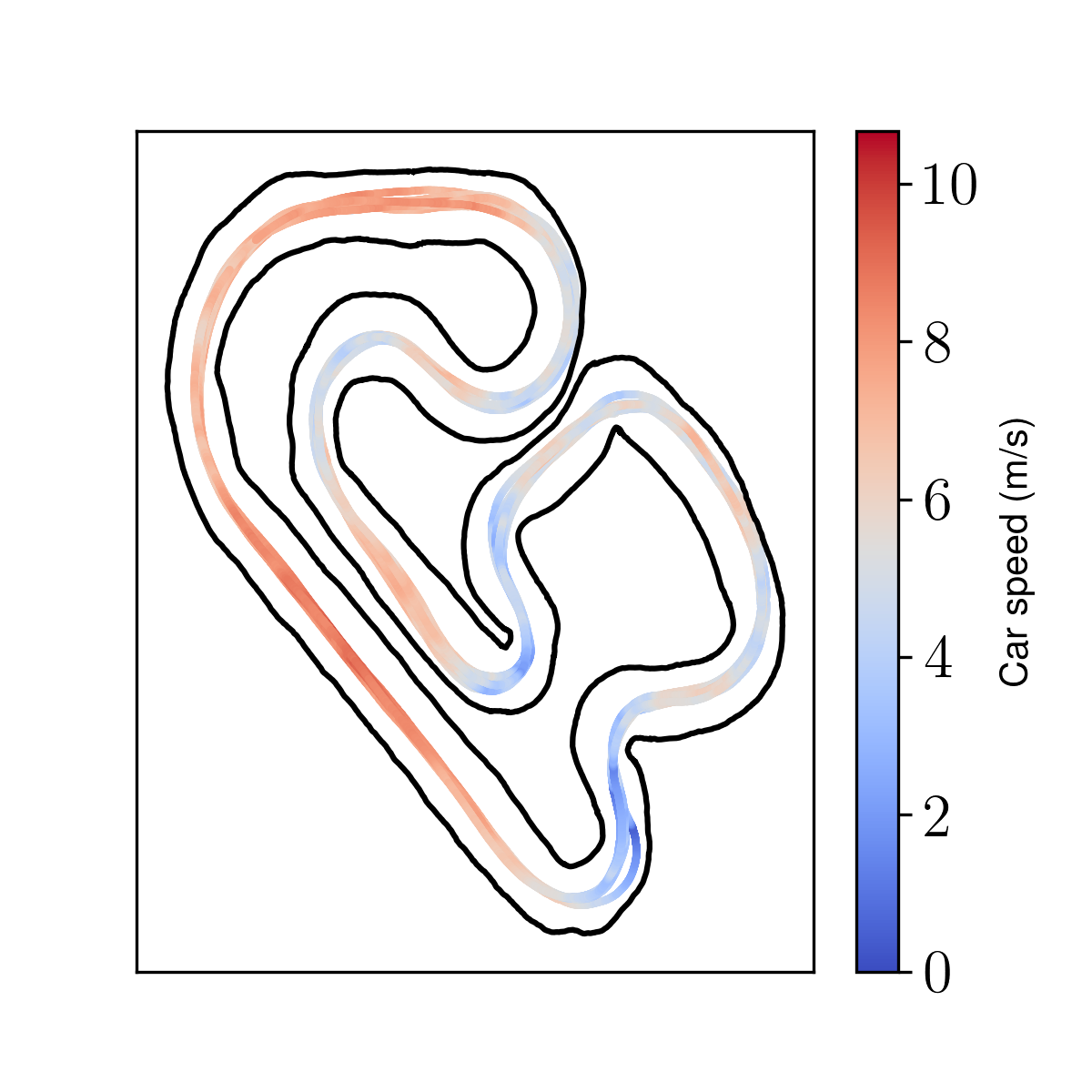}
		\caption{$\gamma_t = 0.8$}
		\label{fig:64-0.8}
	\end{subfigure}
	\begin{subfigure}[b]{0.3\textwidth}
		\centering
		\includegraphics[width=\textwidth]{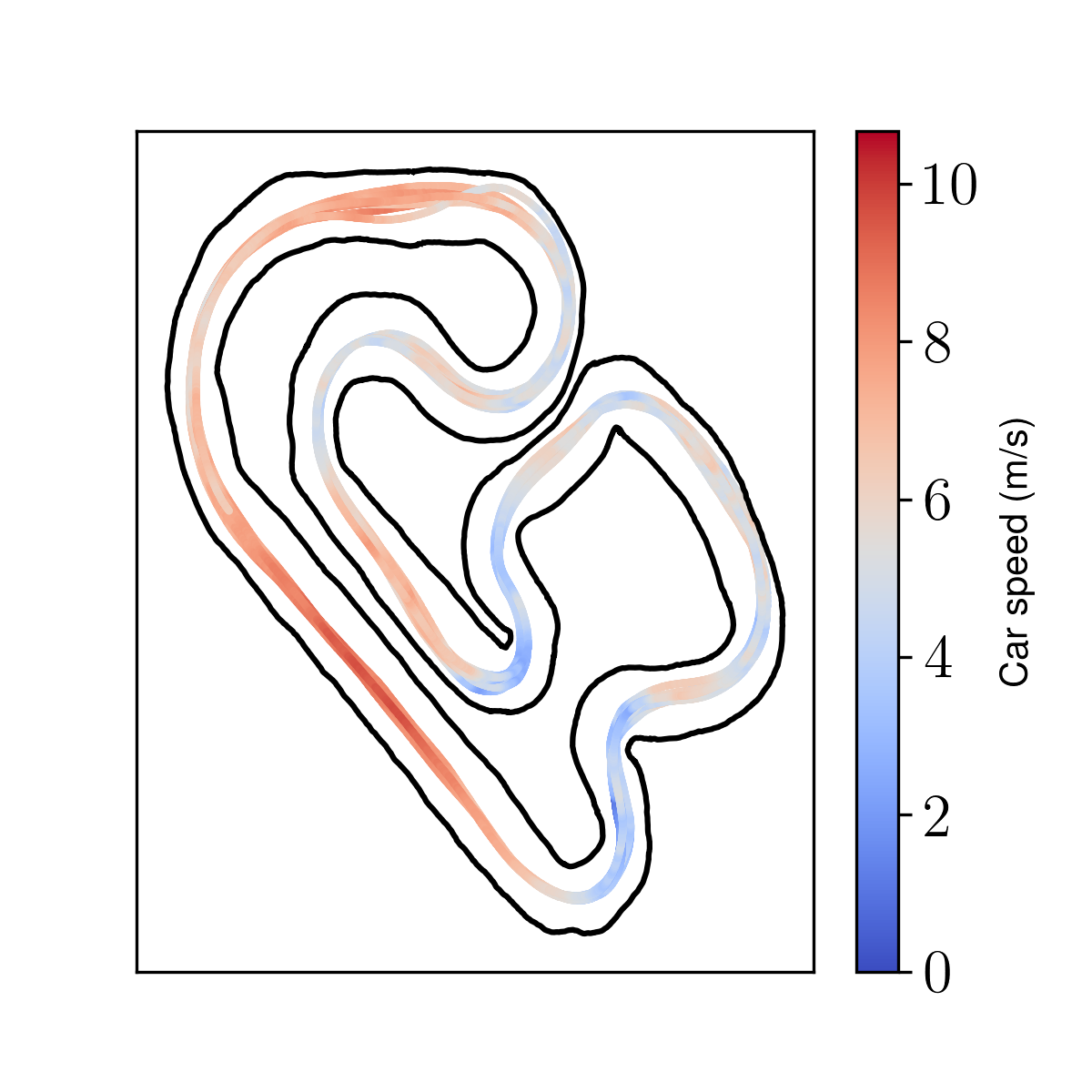}
		\caption{$\gamma_t = 0.6$}
		\label{fig:64-0.6}
	\end{subfigure}
	\caption{Car speeds with 64 samples per gradient estimate and target of $9~\mathrm{m/s}$.}
	\label{fig:speed_64}
\end{figure}
\begin{figure}[h!]
	\centering
	\begin{subfigure}[b]{0.3\textwidth}
		\centering
		\includegraphics[width=\textwidth]{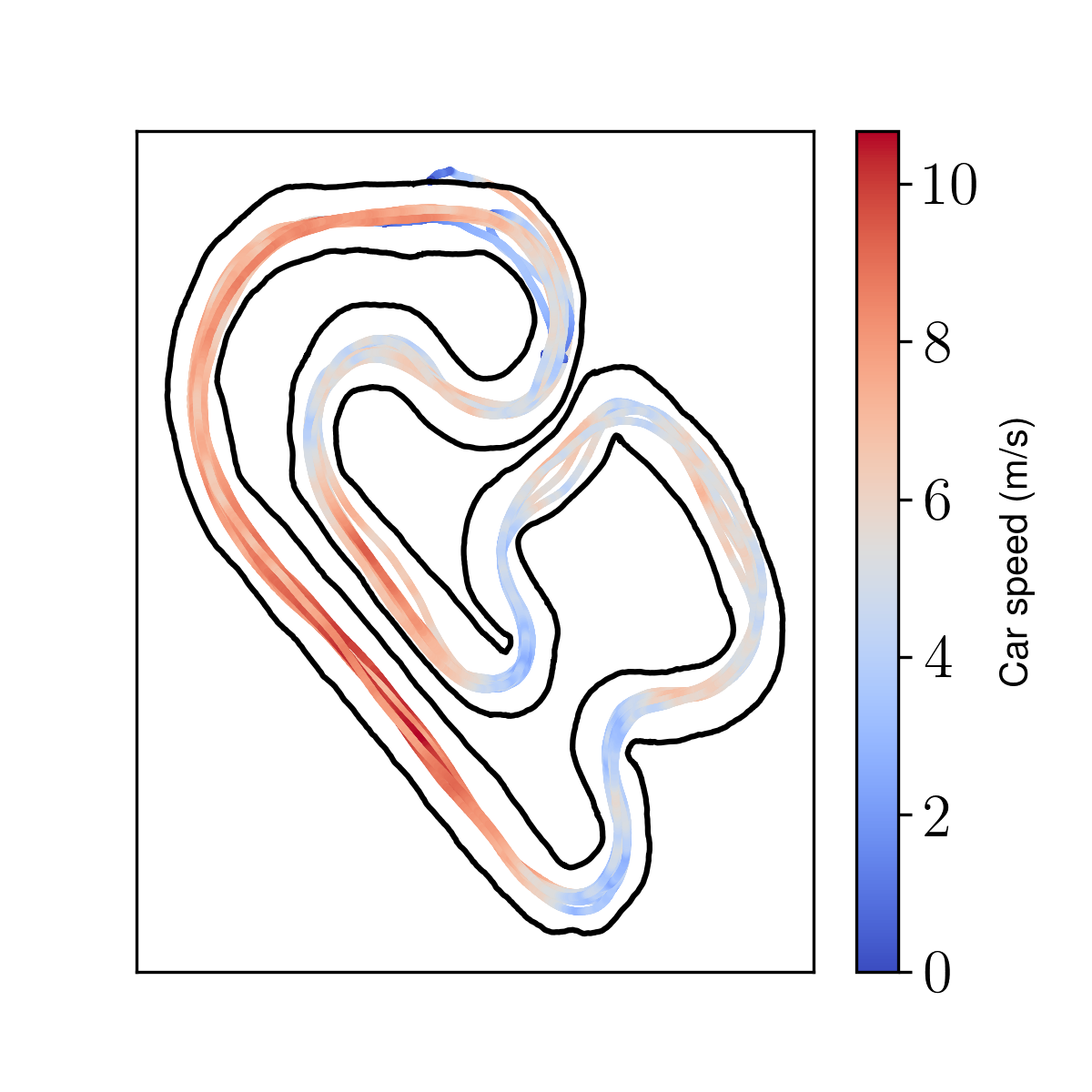}
		\caption{$\gamma_t = 1$}
		\label{fig:fast-64-1.0}
	\end{subfigure}
	\begin{subfigure}[b]{0.3\textwidth}
		\centering
		\includegraphics[width=\textwidth]{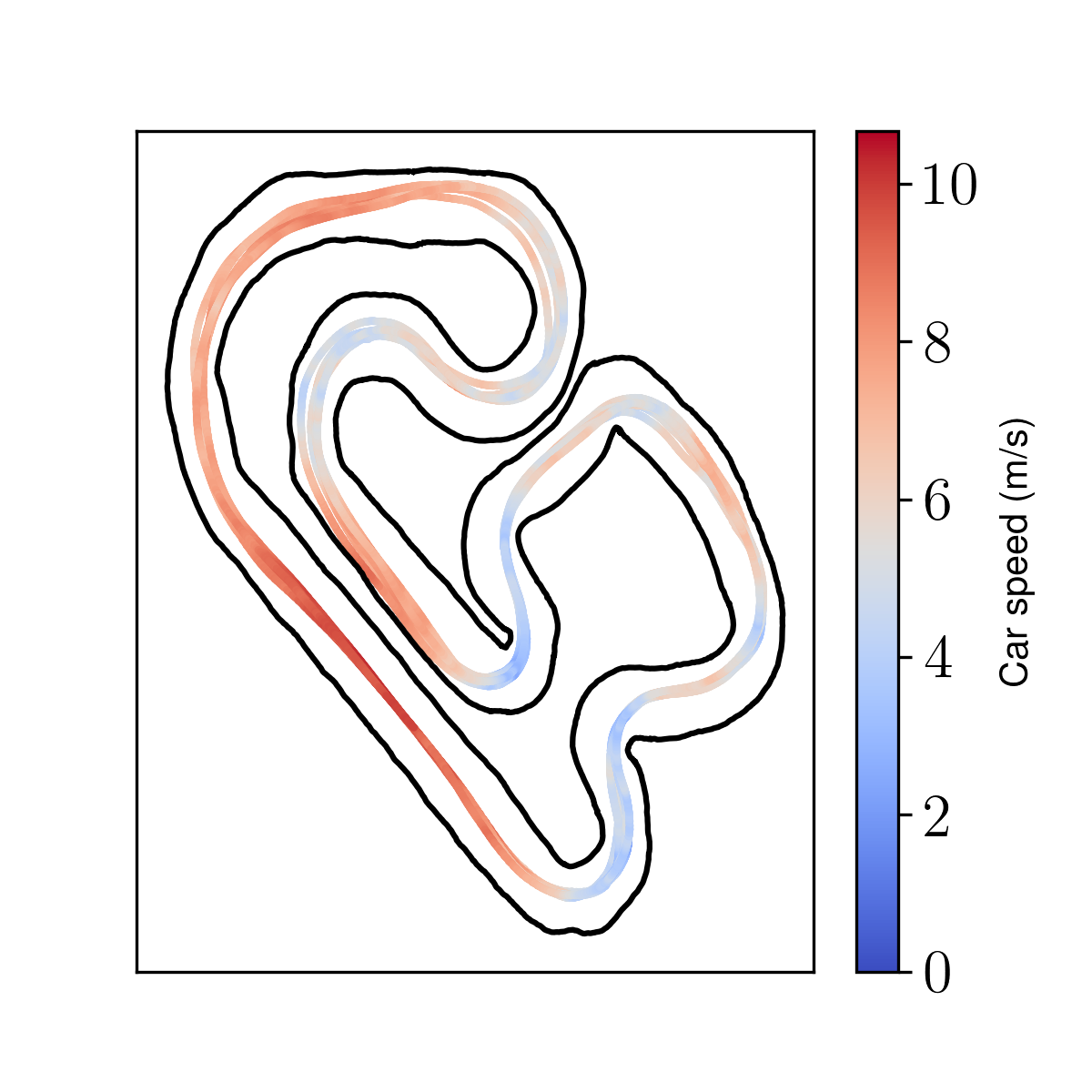}
		\caption{$\gamma_t = 0.6$}
		\label{fig:fast-64-0.6}
	\end{subfigure}
	\caption{Car speeds with 64 samples per gradient estimate and target of $11~\mathrm{m/s}$. In~\cref{fig:fast-64-1.0}, note the crash and U-turn at the top of the plot as well as the wider spread of the paths throughout the whole track. By contrast, in~\cref{fig:fast-64-0.6}, the resulting paths are more consistent, and there are no failure points.}
	\label{fig:fast_speed_64}
\end{figure}

\end{document}